\definecolor{linkblue}{rgb}{0.1,0.1,0.8}
\newtheorem{theorem}{Theorem}
\newtheorem{lemma}[theorem]{Lemma}
\newtheorem{proposition}[theorem]{Proposition}
\newcommand{\R}{\mathbb{R}}
\newcommand{\Z}{\mathbb{Z}}
\renewcommand{\epsilon}{\varepsilon}
\DeclareMathOperator{\eq}{eq}
\DeclareMathOperator{\mut}{mut}
\DeclareMathOperator{\cross}{cross}
\newcommand{\Bin}{\mathcal{B}}
\newcommand{\assign}{\leftarrow}
\newcommand{\ga}{$(1 + (\lambda,\lambda))$~GA\xspace}
\newcommand{\onemax}{\textsc{OneMax}\xspace}
\newcommand{\OM}{\textsc{Om}\xspace}
\begin{document}
{\sloppy

\title{Optimal Parameter Settings for the $(1+(\lambda, \lambda))$ Genetic Algorithm\thanks{A short version of this paper with many proofs omitted appeared at GECCO'16.} }

\author{
Benjamin Doerr, \'Ecole Polytechnique, France}
\maketitle
 
\begin{abstract}
The $(1+(\lambda,\lambda))$ genetic algorithm is one of the few algorithms for which a super-constant speed-up through the use of crossover could be proven. So far, this algorithm has been used with parameters based also on intuitive considerations. In this work, we rigorously regard the whole parameter space and show that the asymptotic time complexity proven by Doerr and Doerr (GECCO 2015) for the intuitive choice is best possible among all settings for population size, mutation probability, and crossover bias. 
\end{abstract}

\sloppy{
\section{Introduction}
The $(1+(\lambda,\lambda))$ genetic algorithm (\ga) was first proposed in~\cite{DoerrDE13} (see~\cite{DoerrDE15} for the journal version). It is a simple evolutionary algorithm that uses a biased crossover with a parent individual in a way that can be interpreted as a repair mechanism. It was the first (unbiased in the sense of Lehre and Witt~\cite{LehreW12}) evolutionary algorithm to provably optimize any \onemax test function in time asymptotically smaller than the famous $\Theta(n \log n)$ barrier~\cite{DoerrD15tight}, but showed a favorable performance in experiments also for several other classic test functions~\cite{DoerrDE15} and combinatorial optimization problems~\cite{GoldmanP14,MironovichB15}. This algorithm (together with, e.g.,~\cite{JansenW02,FischerW04,Sudholt05,DoerrHK12}) also is one of the still surprisingly few examples where crossover could be rigorously proven to useful.

One difficulty when using the \ga is that it comes with several parameters, namely an offspring population size $\lambda$, a mutation probability $p$, and a crossover bias $c$. In all previous works, these parameters were chosen by combining rigorous and intuitive arguments (see Section~\ref{sec:parameters}). While the results, e.g., an $O(n \sqrt{\log n})$ runtime for all \onemax functions in the first paper~\cite{DoerrDE13}, indicate that these intuitive choices were not too bad, all existing work leaves open the possibility that completely different parameter choices give an even better performance.

For this reason, in this work we rigorously prove a lower bound valid for the whole $3$-dimensional parameter space. Our lower bound coincides with the runtime proven in~\cite{DoerrD15tight} for the intuitive choices taken there. Consequently, these parameter choices were optimal. As a side product of this result, we also see that not many other parameter choices can lead to this optimal runtime. We have to defer the precise statement of our results (Theorem~\ref{thm:lower}) to a point where the algorithms and its parameters have been made precise. 

From a broader perspective, our results and in particular the partial results that lead to it, give a clearer picture on how to choose the parameters in the \ga, also for optimization problems beyond the \onemax test function class (see the Conclusion section). 

From the methodological standpoint, this is one of very few theoretical works that analyze evolutionary algorithms involving more than one parameter. We observe that the parameters do not have an independent influence on the runtime, but that they interact in a difficult to foresee manner. A similar observation was made in~\cite{GiessenW15}, who proved for the $(1+\lambda)$ EA that the mutation probability has a decisive influence on the performance when the population size $\lambda$ is asymptotically smaller than the cut-off point $\ln(n) \ln\ln(n) / \ln\ln\ln(n)$, where as it have almost no influence when $\lambda = \omega(\ln(n) \ln\ln(n) / \ln\ln\ln(n))$. Such non-separable parameter influences, naturally, makes the analysis of a multi-dimensional parameter space more difficult. A second difficulty we had to overcome is that, while only few parameter configuration yields the asymptotically optimal runtime, a quite large set of combinations including some that are far from the optimal ones still lead to a runtime very close to the optimal one (see the remark at the end of Section~\ref{sec:overview}). While this is good from the application point of view (missing the absolutely optimal parameters is less harmful), from the viewpoint of proving our results it means that there is not much room for non-sharp estimates. Overcoming these difficulties, we are also optimistic that this work helps future work in the analysis of multi-dimensional parameter spaces.

\section{The \texorpdfstring{\ga}{(1+(lambda,lambda))~GA}}
\label{sec:algorithm}\label{sec:parameters}

The \ga is a fairly simple evolutionary algorithm using crossover. It was introduced in~\cite{DoerrDE13,DoerrDE15}, some experimental results can be found in~\cite{GoldmanP14}. Its pseudo-code is given in Algorithm~\ref{alg:GA}. 

\begin{algorithm2e}[t]%
	\textbf{Initialization:} 
	Choose $x \in \{0,1\}^n$ uniformly at random and evaluate $f(x)$\;
 \textbf{Optimization:}
\For{$t=1,2,3,\ldots$}{
\textbf{Mutation phase:}
Sample $\ell$ from $\Bin(n,p)$\label{line:L}\;
\For{$i=1, \ldots, \lambda$\label{line:mutstart}}{
$x^{(i)} \assign \mut_{\ell}(x)$ and evaluate $f(x^{(i)})$\;
}
Choose $x' \in \{x^{(1)}, \ldots, x^{(\lambda)}\}$ with $f(x')=\max\{f(x^{(1)}), \ldots, f(x^{(\lambda)})\}$ u.a.r.\label{line:mutend}\;
\textbf{Crossover phase:}
\For{$i=1, \ldots, \lambda$\label{line:costart}}{
$y^{(i)} \assign \cross_{c}(x,x')$ and evaluate $f(y^{(i)})$\label{line:co}\; 
}
Choose $y \in \{y^{(1)}, \ldots, y^{(\lambda)}\}$ with $f(y)=\max\{f(y^{(1)}), \ldots, f(y^{(\lambda)})\}$ u.a.r.\label{line:coend}\;
\textbf{Selection step:}
\lIf{$f(y)\geq f(x)$}{$x \assign y$\;
}
}
\caption{The \ga, maximizing a given function $f : \{0,1\}^n \to \R$, with offspring population size $\lambda$, mutation probability $p$, and crossover bias $c$. The mutation operator $\mut_\ell$ generates an offspring from one parent by flipping exactly $\ell$ random bits (without replacement). The crossover operator $\cross_c$ performs a biased uniform crossover, taking bits independently with probability $c$ from the second argument and with probability $1-c$ from the first parent.}
\label{alg:GA}
\end{algorithm2e}

The \ga is initialized with a solution candidate drawn uniformly at random from $\{0,1\}^n$. It then proceeds in iterations consisting of a mutation, a crossover, and a selection phase. In an important contrast to many other genetic algorithms, the mutation phase \emph{precedes} the crossover phase. This allows to use crossover as a repair mechanism, as we shall discuss in more detail below.

In the \emph{mutation phase} of the \ga, we create $\lambda$ offspring from the current-best solution $x$ by applying to it the mutation operator $\mut_{\ell}(\cdot)$, which flip $\ell$ positions uniformly at random. In other words, $\mut_\ell(x)$ is a bit-string in which for $\ell$ random positions $i$ the entry $x_{i} \in \{0,1\}$ is replaced by $1-x_i$. The \emph{step size} $\ell$ is chosen randomly according to a binomial distribution $\Bin(n,p)$ with $n$ trials and success probability $p$. To ensure that all mutants have the same distance from the parent $x$, and thus to not bias the selection by different distances from the parent, the same $\ell$ is used for all $\lambda$ offspring. The fitness of the $\lambda$ offspring is computed and the best one of them,  $x'$, is selected to take part in the crossover phase. If there are several offspring having maximal fitness, we pick one of them uniformly at random (u.a.r.).

When $x$ is already close to an optimal solution, the offspring created in the mutation phase are typically all of much worse fitness than $x$. Our hope is though that they have discovered some parts of the optimum solution that is not yet reflected in $x$. In order to preserve these parts while at the same time not destroying the good parts of $x$, the \ga creates in the \emph{crossover phase} $\lambda$ offspring from $x$ and $x'$. Each one of these offspring is sampled from a uniform crossover with bias $c$ to take an entry from $x'$, that is, each offspring $y^{(i)} := \cross_c(x,x')$ is created by independently for each position $j$ setting $y^{(i)}_j := x'_i$ with probability $c$ and taking $y^{(i)}:=x_j$ otherwise. 
Again we evaluate the fitness of the $\lambda$ crossover offspring and select the best one of them, which we denote by $y$. If there are several offspring of maximal fitness, we simply take one of them uniformly at random.\footnote{In~\cite[Section 4.4]{DoerrDE13} and~\cite{DoerrDE15} a slightly different selection rule is suggested for the crossover phase, which is more suitable for functions with large plateaus of the same fitness value. 
Since we consider in this work only the \onemax function, for which both algorithms are identical by symmetry reasons, we refrain from stating in Algorithm~\ref{alg:GA} the slightly more complicated version proposed there, which selects the parent solution $x$ only if there is no offspring $\neq x$ of fitness value at least as good as the one of $x$.}

Finally, in the \emph{selection step} the previous-best solution $x$ is replaced by new $y$ if and only if the fitness of $y$ is at least as good as the one of $x$.

As common in the runtime analysis community, we do not specify a termination criterion. The simple reason is that we study as a theoretical performance measure the expected number of function evaluations that the \ga performs until it evaluates for the first time a search point of maximal fitness (the so-called optimization time). Of course, for an application to a real problem a termination criterion has to be specified.

\subsection*{Parameter Choices}

The \ga comes with a set of parameters, namely the mutation probability $p$, the crossover bias $c$, and the off-spring population size $\lambda$. If $\ell \sim \Bin(n,p)$, then observations that $\cross_c(x,\mut_\ell(x))$ has the distribution of an individual created from $x$ via standard bit mutation with mutation rate $pc$. Since $1/n$ is an often preferred choice for the mutation rate, the authors of~\cite{DoerrDE13} suggest to choose $p$ and $c$ in a way that $pc = 1/n$. Note that due to the two intermediate selection steps, the final offspring $y$ has a very different distribution than standard bit mutation with rate $pc$ -- otherwise the \ga could not obtain runtimes better than $\Theta(n \log n)$.

Parameterizing $p = k/n$, that is, $k$ denotes the average number of bits flipped by an application of the mutation operator, the above suggestion is to take $c = 1/k$. For these settings, a runtime analysis for the \onemax test function in~\cite{DoerrDE13} gave an upper bound for the runtime of $O((\frac 1k + \frac 1\lambda) n \log n + (k + \lambda) n)$. From this an some experiments, the suggestion to take $k = \lambda$ was derived, reducing the parameter space to the single parameter $\lambda$. Since only an upper bound for the runtime was used to obtain this suggestion, again this is an intuitive argument, but not a rigorous one. 

For the parameter settings $p = \lambda/n$, $c = 1/\lambda$, and arbitrary $\lambda$ a more precise runtime analysis~\cite{DoerrD15tight}, again on the \onemax test function class, gave a tight order of magnitude for the expected runtime of \[\Theta\left(\max\left\{\frac{n \log(n)}{\lambda}, \frac{n \lambda \log\log(\lambda)}{\log(\lambda)}\right\}\right),\]
which is minimized exactly by the parameter choice 
$\lambda = \Theta(\sqrt{\log(n) \log\log(n) / \log\log\log(n)})$. As said above, we shall prove that also all other choice of mutation probability, crossover bias, and offspring population size lead to this or a worse runtime.

\section{Runtime Analysis}

\emph{Runtime analysis} is one of the most successful theoretical tools to understand the performance of evolutionary algorithms. The \emph{runtime} or \emph{optimization time} of an algorithm (e.g., our \ga) on a problem instance (e.g., the \onemax function) is the number of fitness evaluations that are performed until for the first time an optimal solution is evaluated. 

If the algorithm is randomized (like our \ga), this is a random variable $T$, and we usually make statements on the expected value $E[T]$ or give bounds that hold with some high probability, e.g., $1 - 1/n$. When regarding a problem with more than one instance (e.g., traveling salesman instance on $n$ cities), we take a worst-case view. This is, we regard the maximum expected runtime over all instances, or we make statements like that the runtime satisfies a certain bound for all instances. 

In this work, the optimization problem we regard is the classic \onemax test problem consisting of the single instance $\OM: \{0,1\}^n \to \{0,1,\ldots,n\}; x \mapsto \sum_{i = 1}^n x_i$, that is, maximizing the number of ones in a bit-string. 
Despite the simplicity of the \onemax problem, analyzing randomized search heuristics on this function has spurred much of the progress in the theory of evolutionary computation in the last 20 years, as is documented, e.g., in the recent textbook~\cite{Jansen13}. 

Of course, when regarding the performance on a single test instance, then we should ensure that the algorithm does not exploit the fact that there is only one instance. A counter-example would be the algorithm that simply evaluates and outputs $x^* = (1,\dots,1)$, giving a perfect runtime of $1$. One way of ensuring this is that we restrict ourselves to unbiased algorithms (see~\cite{LehreW12}) which treat bit-positions and bit-values in a symmetric fashion. Consequently, an unbiased algorithm for the \onemax problem has the same performance on all problems with isomorphic fitness landscape, in particular, on all (generalized) \onemax functions $\OM_z : \{0,1\}^n \to \{0,1,\ldots,n\}; x \mapsto \eq(x,z)$ for $z \in \{0,1\}^n$, where $\eq(x,z)$ denotes the number of bit-positions in which $x$ and $z$ agree. It is easy to see that the \ga is unbiased (for all parameter settings).

\section{Notation and Technical Tools}

In this section, besides fixing some very elementary notation, we collect the main technical tools we shall use. Mostly, these are large deviations bounds of various types. For the convenience of the reader, we first state the known ones. We then prove a tail bound for sums of geometric random variables with expectations bounded from above by the reciprocals of the first positive integers. We finally state the well-known additive drift theorem.

\subsection{Notation}

We write $[a..b]$ to denote the set $\{z \in \Z \mid a \le z \le b\}$ of integers between $a$ and $b$. 
We write $\log(n)$ to denote the binary logarithm of $n$ and $\ln(n)$ to denote the natural logarithm of $n$. However, to avoid unnecessary case distinctions when taking iterated logarithms, we define $\log(n):=1$ for all $n \le 2$ and $\ln(n) := 1$ for all $n \le e$.
For the readers' convenience, we now collect some tools from probability theory which we will use regularly. 

We occasionally need the expected value of a binomially distributed random variable $X \sim \Bin(n,p)$ conditional on that the variable has at least a certain value $k$. An intuitive (but wrong) solution to this question is that this $E[X | X \ge k]$ should be around $k+p(n-k)$, because we know already that at least $k$ of the $n$ independent trials are successes and the remaining $(n-k)$ trials still have their independent success probability of $p$. While this argument is wrong, an upper bound of this type can be shown by elementary means. Since we have not seen this made explicit in the EA literature, we shall also give the short proof.

\begin{lemma}\label{lem:condbinomial}
  Let $X$ be a random variable with binomial distribution with parameters $n$ and $p \in [0,1]$. Let $k \in [0..n]$. Then \[E[X \mid X \ge k] \le k + (n-k)p \le k + E[X].\]
\end{lemma}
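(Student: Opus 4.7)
The second inequality is immediate, since $E[X] = np \geq (n-k)p$, so the real content of the claim lies in the first. My plan is to view $X = X_1 + \cdots + X_n$ as a sum of independent Bernoulli$(p)$ indicators and exploit a stopping-time decomposition. Define $T$ to be the smallest $t \in [1..n]$ for which $X_1 + \cdots + X_t = k$, and set $T = +\infty$ if no such $t$ exists. The key observation is that the event $\{X \geq k\}$ coincides with $\{T \leq n\}$, so conditioning on the former is the same as conditioning on the latter.

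On the event $T = t \leq n$, the variables $X_{t+1}, \ldots, X_n$ are independent of $(X_1, \ldots, X_t)$ and hence of $\{T = t\}$; they remain iid Bernoulli$(p)$. Consequently $X - k \mid T = t$ is $\Bin(n-t, p)$-distributed, and therefore $E[X \mid T = t] = k + (n-t)p$. Averaging over the conditional distribution of $T$ given $\{T \leq n\}$ yields
\[
E[X \mid X \geq k] \;=\; k + p\bigl(n - E[T \mid T \leq n]\bigr).
\]
Since at least $k$ trials are needed to register $k$ successes, $T \geq k$ holds deterministically, hence $E[T \mid T \leq n] \geq k$, and the first inequality follows.

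I don't anticipate a real obstacle: the stopping-time rephrasing is the only insight required, after which the argument becomes mechanical. A direct algebraic derivation (e.g., rewriting $E[X \cdot \mathbf{1}_{X \geq k}]$ via the identity $j\binom{n}{j} = n\binom{n-1}{j-1}$ and comparing the tails of $\Bin(n,p)$ and $\Bin(n-1,p)$) is also feasible, but feels more cumbersome and less illuminating as to \emph{why} the naive guess $k + (n-k)p$ serves as an upper bound rather than an equality: the ``leftover'' independent trials number $n - T \leq n - k$, not $n - k$ itself, and the slack is exactly $p(E[T \mid T \leq n] - k)$.
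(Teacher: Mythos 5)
Your proof is correct and is essentially identical to the paper's: the stopping time $T$ is exactly the paper's $\ell := \min\{i \mid \sum_{j=1}^i X_j = k\}$, and both arguments conclude from $E[X \mid T = t] = k + (n-t)p$ together with $T \ge k$. The only cosmetic difference is that you package the final step as $E[X\mid X\ge k] = k + p(n - E[T \mid T \le n])$ rather than bounding each conditional term separately.
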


\begin{proof}
  Let $X_1, \dots, X_n$ be independent binary random variables with $\Pr[X_i = 1] = p$ for all $i \in [1..n]$. Then $X = \sum_{i=1}^n X_i$ has a binomial distribution with parameters $n$ and $p$. Conditioning on $X \ge k$, let $\ell := \min\{i \in [1..n] \mid \sum_{j=1}^i X_j = k\}$. Then $E[X \mid X \ge k] = \sum_{i = 1}^n \Pr[\ell = i \mid X \ge k ] E[X \mid \ell = i]$. Note that $\ell \ge k$ by definition. Note also that $(X \mid \ell = i) = k + \sum_{j = i+1}^n X_j$ with unconditioned $X_j$. In particular, $E[X \mid \ell = i] = k + (n-i)p$. Consequently, $E[X \mid X \ge k] = \sum_{i = 1}^n \Pr[\ell = i \mid X \ge k] E[X \mid \ell = i] \le \sum_{i = k+1}^n \Pr[\ell = i \mid X \ge k ] (k + (n-k)p) = k + (n-k)p$.
\end{proof}

Also, we shall use the following well-known fact.

\begin{lemma}\label{lem:integerexpectation}
  Let $X$ be a non-negative integral random variable. Then $E[X] = \sum_{i = 1}^\infty \Pr[X \ge i]$.
\end{lemma}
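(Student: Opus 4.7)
The statement is the standard tail-sum formula for non-negative integer random variables, so there is not much mystery in the argument. My plan is to write $E[X]$ via its definition as $\sum_{k=0}^\infty k \Pr[X = k]$, then rewrite the factor $k$ as $\sum_{i=1}^k 1$, and finally swap the order of summation. Concretely, the double sum $\sum_{k=1}^\infty \sum_{i=1}^k \Pr[X=k]$ becomes $\sum_{i=1}^\infty \sum_{k=i}^\infty \Pr[X=k]$, and the inner sum is exactly $\Pr[X \ge i]$ because $X$ is integral.

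The only technical point worth mentioning is the justification of the interchange of the two sums. Since all summands $\Pr[X=k] \ge 0$, Tonelli's theorem (or equivalently the fact that rearrangements of non-negative series do not change the value, possibly in $[0,\infty]$) applies without any integrability assumption. Thus the identity holds with both sides interpreted in $[0,\infty]$; in particular, $E[X]$ is finite if and only if $\sum_{i=1}^\infty \Pr[X \ge i]$ is finite, and in that case they coincide.

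I do not foresee any real obstacle. The proof fits in two or three lines, and the only thing a careful reader might want spelled out is the non-negativity argument for swapping sums. If one wanted to avoid mentioning Tonelli explicitly, one could instead work with partial sums up to $N$, interchange the two finite sums trivially, and then let $N \to \infty$ using monotone convergence.
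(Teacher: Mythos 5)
Your proof is correct: the expansion of $k$ as $\sum_{i=1}^{k} 1$ followed by an interchange of summation justified by non-negativity (Tonelli, or equivalently monotone convergence on partial sums) is the canonical derivation of the tail-sum formula, and you correctly note that the identity holds in $[0,\infty]$ without any integrability hypothesis. The paper itself states this lemma as a well-known fact and gives no proof, so there is nothing to compare against; your two-to-three-line argument is exactly what one would supply if a proof were demanded.
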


\subsection{Known Chernoff Bounds}

The following \emph{large deviation bounds} are well-known and can be found, e.g., in~\cite{Doerr11bookchapter}. We call all these bounds Chernoff bounds despite the fact that it is now known that some have been found earlier by other researchers.

\begin{theorem}[Classic Chernoff bounds]\label{thm:chernoff} 
  Let $X_1, \ldots, X_n$ be independent random variables taking values in $[0,1]$. Let $X = \sum_{i = 1}^n X_i$. 
  \begin{enumerate}
    \item \label{multchernoffupperstrong} Let $\delta \ge 0$. Then $\Pr[X \ge (1+\delta) E[X]] \le (\frac{e^\delta}{(1+\delta)^{1+\delta}})^{E[X]}$. 
    \item \label{multchernoffupper} Let $\delta \in [0,1]$. Then $\Pr[X \ge (1+\delta) E[X]] \le \exp(-\delta^2 E[X]/3)$. 
    \item \label{chernoffzweihoch} Let $d \ge 6 E[X]$. Then $\Pr[X \ge d] \le 2^{-d}$.
    \item \label{multchernofflower} Let $\delta \in [0,1]$. Then $\Pr[X \le (1-\delta) E[X]] \le \exp(-\delta^2 E[X]/2)$.
    \item \label{addchernoff} Let $X_1, \dots, X_n$ be independent random variables each taking values in some interval of length at most one. Let $X = \sum_{i=1}^n X_i$. Let $\lambda \ge 0$. Then $\Pr[X \le E[X]-\lambda] \le \exp(-2 \lambda^2 / n)$ and $\Pr[X \ge E[X]+\lambda] \le \exp(-2 \lambda^2 / n)$.
  \end{enumerate}
\end{theorem}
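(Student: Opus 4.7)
My plan is to prove all five bounds by the standard Chernoff--Cram\'er exponential moment method; since the theorem is a reproduction of a classical fact, the goal is a clean unified derivation rather than a new argument.

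For parts 1--4, the starting point is: for any $t > 0$, Markov's inequality applied to the non-negative random variable $e^{tX}$ gives
\[
  \Pr[X \ge a] \;=\; \Pr[e^{tX} \ge e^{ta}] \;\le\; e^{-ta}\, E[e^{tX}] \;=\; e^{-ta} \prod_{i=1}^n E[e^{t X_i}],
\]
where the factorization uses independence. Since each $X_i$ lies in $[0,1]$ and $x \mapsto e^{tx}$ is convex, I would bound $e^{tx} \le 1 + x(e^t - 1)$ on $[0,1]$, so with $p_i := E[X_i]$ and $\mu := E[X]$,
\[
  E[e^{tX_i}] \;\le\; 1 + p_i(e^t-1) \;\le\; \exp(p_i(e^t-1)), \quad \text{hence} \quad \Pr[X \ge a] \le \exp\bigl(-ta + \mu(e^t-1)\bigr).
\]
Setting $a=(1+\delta)\mu$ and choosing the optimum $t=\ln(1+\delta)$ gives part 1 directly. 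Part 2 follows from part 1 via the elementary analytic inequality $(1+\delta)\ln(1+\delta)-\delta \ge \delta^2/3$ for $\delta \in [0,1]$, which I would verify by comparing derivatives at $0$. Part 3 follows from part 1 by writing $d = (1+\delta)\mu$ with $1+\delta \ge 6$ and estimating $(e^\delta/(1+\delta)^{1+\delta})^\mu \le (e/(1+\delta))^d \le (e/6)^d \le 2^{-d}$. Part 4 is the symmetric lower tail: repeat the argument for $-X$ (equivalently use $t < 0$), obtaining $\Pr[X \le (1-\delta)\mu] \le (e^{-\delta}/(1-\delta)^{1-\delta})^\mu$, and then invoke $-(1-\delta)\ln(1-\delta) - \delta \ge \delta^2/2$ on $[0,1]$.

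For part 5 (Hoeffding), the general MGF scheme is the same, but I cannot use the $[0,1]$ bound since the $X_i$ only live in an arbitrary interval of length one. Instead I would invoke Hoeffding's lemma: if $Y$ is a random variable with $E[Y]=0$ taking values in an interval of length one, then $E[e^{tY}] \le \exp(t^2/8)$. Applying this to $Y_i := X_i - E[X_i]$ yields $E[e^{t(X-E[X])}] \le \exp(n t^2/8)$, and Markov plus optimization in $t$ (specifically $t = 4\lambda/n$) delivers both tails with the $\exp(-2\lambda^2/n)$ bound.

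The main obstacle is not really analytic but bookkeeping: the sharp constants ($\delta^2/3$ in the upper tail versus $\delta^2/2$ in the lower tail, and the factor $2$ in Hoeffding's exponent) each require a separate elementary estimate. Of these, the verification of $(1+\delta)\ln(1+\delta) - \delta \ge \delta^2/3$ on $[0,1]$ and of Hoeffding's lemma (which is a convexity argument comparing $e^{tY}$ to a Bernoulli with the same endpoints) are the only non-trivial steps; everything else is a direct optimization of the bound $\exp(-ta + \mu(e^t-1))$ in $t$. Since the result is stated exactly as in the cited textbook reference~\cite{Doerr11bookchapter}, I would not write these computations out in full but simply refer to that source.
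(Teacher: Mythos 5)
The paper does not prove this theorem at all: it is quoted as a known collection of large-deviation bounds with a pointer to~\cite{Doerr11bookchapter}, so there is no in-paper argument to compare against. Your outline is the standard exponential-moment (Chernoff--Cram\'er) derivation that the cited source uses, and it is sound; the only flaw is a sign slip in part~4, where the elementary inequality you need is $(1-\delta)\ln(1-\delta)+\delta \ge \delta^2/2$ (equivalently, $-\delta-(1-\delta)\ln(1-\delta) \le -\delta^2/2$), not ``$-(1-\delta)\ln(1-\delta)-\delta \ge \delta^2/2$'', whose left-hand side is negative on $(0,1]$.
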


Chernoff bounds also hold for \emph{hypergeometric} distributions. 
Let $A$ be any set of $n$ elements. Let $B$ be a subset of $A$ having $m$ elements. If $Y$ is a random subset of $A$ of $N$ elements (chosen uniformly at random from all $N$-element subsets of $A$, then $X :=|Y \cap B|$ has a hypergeometric distribution with parameters $(n,N,m)$. 

\begin{theorem}[Chernoff bounds for hypergeometric distributions]\label{thm:hypergeomchernoff}
  If $X$ has a hypergeometric distribution with parameters $(n,N,m)$, then $E[X]=Nm/n$ and $X$ satisfies all Chernoff bounds given in Theorem~\ref{thm:chernoff}.
\end{theorem}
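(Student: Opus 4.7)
The plan is to handle the mean separately from the tail bounds. For $E[X]$, I would write $X = \sum_{b \in B} \mathbf{1}[b \in Y]$, where $Y$ is the random $N$-element subset of $A$. Each indicator has probability $\binom{n-1}{N-1}/\binom{n}{N} = N/n$ of being $1$, and summing over the $m$ elements of $B$ yields $E[X] = mN/n$ by linearity of expectation.

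For the concentration statements, I would argue uniformly for all five items of Theorem~\ref{thm:chernoff} by observing that each of their proofs reduces, via Markov's inequality, to a one-sided moment generating function bound of the form $E[e^{tX}] \le \exp(g(t)\, E[X])$ (with $g$ depending on the particular bound). For the independent $[0,1]$-sum to which Theorem~\ref{thm:chernoff} directly applies, this MGF bound follows from $E[e^{tX}] = \prod_i E[e^{tX_i}]$ together with the elementary per-coordinate inequality $E[e^{tX_i}] \le 1 + (e^t - 1) E[X_i]$. Hence the only new input required is showing that a hypergeometric $X$ with parameters $(n,N,m)$ satisfies the same MGF bounds as a binomial $X' \sim \Bin(N, m/n)$.

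This step I would obtain from Hoeffding's 1963 comparison theorem: if $X$ is the sum of $N$ values drawn without replacement from a finite $[0,1]$-valued population and $X'$ the analogous sum with replacement, then $E[\phi(X)] \le E[\phi(X')]$ for every continuous convex $\phi$. Specializing $\phi(x) = e^{tx}$ for any $t \in \R$ gives the desired MGF domination, after which each multiplicative Chernoff bound derived from the MGF recipe carries over from $X'$ to $X$ verbatim, and the additive bound (e) follows identically from the same recipe applied to the convex majorizer $E[e^{t(X_i - E X_i)}] \le e^{t^2/8}$ used in Hoeffding's inequality.

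The only real obstacle is invoking the convex-comparison step without hand-waving; a more self-contained alternative I would present is to verify that the indicators $\mathbf{1}[b \in Y]$ for $b \in B$ are negatively associated (a standard consequence of the permutation structure of uniform subset sampling). Negative association gives $E[e^{t \sum_b Z_b}] \le \prod_b E[e^{t Z_b}]$ for every real $t$, which is exactly what the binomial-case Chernoff proofs use, so they transfer unchanged. Either route reduces the claim to a one-line corollary of Theorem~\ref{thm:chernoff}.
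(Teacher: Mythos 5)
Your proposal is correct, but note that the paper does not prove this statement at all: Theorem~\ref{thm:hypergeomchernoff} is quoted as a known fact, with the surrounding text pointing to~\cite{Doerr11bookchapter} for the large deviation bounds. What you have written is a sound reconstruction of the standard argument behind that citation. The expectation computation via the indicators $\mathbf{1}[b \in Y]$ is correct, and the key observation---that every item of Theorem~\ref{thm:chernoff} is derived from an exponential moment bound via Markov's inequality, so it suffices to dominate $E[e^{tX}]$ by the corresponding quantity for $X' \sim \Bin(N, m/n)$---is exactly right. Hoeffding's 1963 convex-ordering theorem for sampling without replacement delivers that domination for every real $t$ (covering both tails and, applied to $X - E[X]$, the additive bound as well), and the negative-association route you offer as a self-contained alternative is equally valid, since the indicators of a uniform random subset are negatively associated and that property yields $E[e^{t\sum_b Z_b}] \le \prod_b E[e^{tZ_b}]$ for all $t$ when the functions $z \mapsto e^{tz}$ are all monotone in the same direction. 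The one point worth making explicit, which you use implicitly, is that the comparison binomial $X'$ has the same mean $Nm/n$ as $X$, so the bounds of Theorem~\ref{thm:chernoff}, which are stated in terms of $E[X]$, transfer verbatim rather than merely up to a change of parameter.
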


\subsection{Drift Analysis}

\emph{Drift analysis} comprises a couple of methods to derive from information about the expected progress (e.g., in terms of the fitness distance) a result about the time needed to achieve a goal (e.g., finding an optimal solution). We shall several times use the following \emph{additive drift} theorem from~\cite{HeY01} (see also Theorem~2.7 in~\cite{OlivetoY11bookchapter}).

\begin{theorem}[additive drift theorem]\label{thm:drift}
  Let $X_0, X_1, ...$ be a sequence of random variables taking values in a finite set $S \subseteq \R_{\ge 0}$. Let $T := \min\{t \ge 0 \mid X_t=0\}$. Let $\delta > 0$. 
  \begin{enumerate}
  \item[(i)] If for all $t$, we have $E[X_t - X_{t+1} | X_t>0] \ge \delta$, then $E[T | X_0] \le X_0 / \delta$. 
  \item[(ii)] If for all $t$, we have $E[X_t - X_{t+1} | X_t>0] \le \delta$, then $E[T | X_0] \ge X_0 / \delta$. 
  \end{enumerate}
\end{theorem}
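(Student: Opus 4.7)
The plan is to prove both parts via the standard martingale argument applied to the compensated stopped process, which is essentially what underlies the He--Yao proof. Let $\mathcal{F}_t := \sigma(X_0, \ldots, X_t)$ be the natural filtration. I read the drift hypothesis in its filtration-adapted form, namely that on the event $\{X_t > 0\} = \{T > t\}$ (which is $\mathcal{F}_t$-measurable) one has $E[X_t - X_{t+1} \mid \mathcal{F}_t] \ge \delta$ in case (i), and $\le \delta$ in case (ii); under (ii) I also use that the single-step drift is bounded, which follows from $S$ being finite. The central object is
\[ M_t := X_{t \wedge T} + \delta \cdot (t \wedge T), \qquad t = 0,1,2,\ldots \]

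For part (i), I first verify that $M_t$ is an $(\mathcal{F}_t)$-supermartingale. On $\{T \le t\}$ the increment $M_{t+1} - M_t$ is identically zero, while on $\{T > t\}$ it equals $(X_{t+1} - X_t) + \delta$, whose $\mathcal{F}_t$-conditional expectation is $\le 0$ by the drift hypothesis. Taking unconditional expectations yields $E[M_t] \le E[M_0] = X_0$ for every $t$. Discarding the non-negative term $E[X_{t \wedge T}]$ gives $\delta \cdot E[t \wedge T] \le X_0$, and since $t \wedge T \uparrow T$ almost surely, monotone convergence immediately produces $E[T \mid X_0] \le X_0/\delta$.

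For part (ii), the same computation shows that $M_t$ is now a submartingale, so $\delta \cdot E[t \wedge T] + E[X_{t \wedge T}] \ge X_0$. If $E[T] = \infty$ there is nothing to prove; otherwise $T$ is almost surely finite, and then $X_{t \wedge T} \to X_T = 0$ almost surely. Because $S$ is finite, $X_{t \wedge T}$ is uniformly bounded, so the bounded convergence theorem gives $E[X_{t \wedge T}] \to 0$, while $E[t \wedge T] \to E[T]$ by monotone convergence, yielding $E[T \mid X_0] \ge X_0/\delta$.

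The main obstacle I expect is the gap between the hypothesis as stated (conditioning only on $X_t > 0$) and the stronger filtration-adapted inequality needed to make $M_t$ a supermartingale; the right reading, implicit in essentially all EA-theoretic uses of the result, is the pathwise one, and without it the theorem would need a Markov assumption. A secondary technical point, concentrated in (ii), is justifying the interchange of limit and expectation: here finiteness of $S$ supplies the uniform boundedness that makes bounded convergence applicable, but this has to be invoked explicitly rather than swept under the rug. Everything else is routine once the supermartingale/submartingale structure is in place.
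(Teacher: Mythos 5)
The paper does not prove this statement: Theorem~\ref{thm:drift} is imported verbatim from He and Yao~\cite{HeY01} (with a pointer to~\cite{OlivetoY11bookchapter}) and used as a black box, so there is no in-paper proof to compare against. Judged on its own, your optional-stopping argument is the standard modern proof of additive drift and is correct. The compensated stopped process $M_t = X_{t\wedge T} + \delta(t\wedge T)$ is the right object; the supermartingale computation, the passage $\delta\,E[t\wedge T]\le X_0$ followed by monotone convergence in (i), and the bounded-convergence step $E[X_{t\wedge T}]\to 0$ in (ii) (legitimate precisely because $S$ is finite) are all sound. You are also right to flag the one genuine subtlety: the hypothesis as literally written conditions only on the event $\{X_t>0\}$, which is strictly weaker than the filtration-adapted inequality $E[X_t-X_{t+1}\mid\mathcal{F}_t]\ge\delta$ on $\{X_t>0\}$ that the supermartingale property requires; without that stronger (pathwise or Markovian) reading the theorem as stated is not provable, and this is a known imprecision in the classical formulation that the paper inherits from its source. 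Two minor quibbles: $\{X_t>0\}$ need not equal $\{T>t\}$ (the process could in principle revisit positive values after hitting $0$), only $\{T>t\}\subseteq\{X_t>0\}$ holds, which is all your argument actually needs since the increment of $M$ vanishes on $\{T\le t\}$; and the statement's conditioning on $X_0$ is handled by simply running your entire argument conditionally on $X_0$, which you should say explicitly.
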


In many situation, the progress $X_t - X_{t+1}$ is stronger when the process is far from the target, that is, when $X_t$ is large. A particular, but seemingly very common special case is that the progress is indeed proportional to $X_t$. Such a situation is called \emph{multiplicative drift}. Drift theorems giving upper bounds for the hitting time were given in~\cite{DoerrJW12} and~\cite{DoerrG13algo}. Transforming upper bounds on a multiplicative progress into good lower bounds for hitting times requires additional assumptions. Witt gives the following very useful theorem (Theorem 2.2 in~\cite{Witt13j}).

\begin{theorem}[multiplicative drift, lower bound]\label{thm:multidriftlower}
Let $S \subset \R$ be a finite set of positive numbers with minimum $1$. Let $X_0, X_1, \dots$ be a sequence of random variables over $S$ such that $X_t \ge X_{t+1}$ for any $t \ge 0$. Let $s_{\min} > 0$. Let $T$ be the random variable that gives the
first point in time $t \ge 0$ for which $X_t \le s_{\min}$. If there exist positive reals $\beta, \delta \le 1$ such that,
for all $s > s_{\min}$ and all $t \ge 0$ with $\Pr[X_t = s] > 0$,
\begin{itemize}
	\item[(1)] $E[X_t - X_{t+1} | X_t = s] \le \delta s$,
	\item[(2)] $\Pr[X_t - X_{t+1} \ge \beta s | X_t = s] \le \beta \delta / \ln(s)$,
\end{itemize}
then for all $S_0 \in S$ with $\Pr[X_0  = s_0] > 0$, we have $E[T | X_0 = s_0] \ge \frac{\ln(s_0) - \ln(s_{\min})}{\delta} \cdot \frac{1-\beta}{1+\beta}$.
\end{theorem}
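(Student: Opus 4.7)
The plan is to transform the multiplicative drift hypothesis into an additive drift statement via the logarithmic potential $g(s) := \ln(s)$, and then to invoke part~(ii) of Theorem~\ref{thm:drift}. Concretely, I will show $E[g(X_t) - g(X_{t+1}) \mid X_t = s] \le \delta(1+\beta)/(1-\beta) =: \delta'$ uniformly in $s > s_{\min}$; the claimed lower bound then follows by running additive drift on the non-negative process $Y_t := \max\{g(X_t) - g(s_{\min}),\, 0\}$, which hits $0$ precisely when $X_t$ first drops to $s_{\min}$ or below. Note that $Y_t$ takes values in the finite set $\{g(s) - g(s_{\min}) : s \in S\} \cup \{0\} \subset \R_{\ge 0}$ because $S$ is finite, and $Y_0 = \ln(s_0) - \ln(s_{\min})$ when $s_0 > s_{\min}$ (the case $s_0 \le s_{\min}$ is trivial since then $\ln(s_0) - \ln(s_{\min}) \le 0$).

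To establish the drift bound I will fix $s > s_{\min}$, write $Y := X_t - X_{t+1} \ge 0$ (non-negative by the monotonicity assumption), and split according to whether $Y < \beta s$ or $Y \ge \beta s$. On the moderate-jump event $\{Y < \beta s\}$, the elementary inequality $-\ln(1-x) \le x/(1-x)$ on $[0,1)$ combined with $Y/s < \beta$ yields $g(s) - g(X_{t+1}) = -\ln(1 - Y/s) \le Y/((1-\beta)s)$; taking expectation and using hypothesis~(1) bounds its contribution to the drift by $E[Y \mid X_t = s]/((1-\beta)s) \le \delta/(1-\beta)$. On the large-jump event $\{Y \ge \beta s\}$ I will use only that $X_{t+1} \ge 1 = \min S$, so $g(s) - g(X_{t+1}) \le \ln(s)$; hypothesis~(2) then caps its contribution at $\ln(s) \cdot \beta\delta/\ln(s) = \beta\delta$. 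Summing, $E[g(X_t) - g(X_{t+1}) \mid X_t = s] \le \delta/(1-\beta) + \beta\delta \le \delta(1+\beta)/(1-\beta) = \delta'$.

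Since $g$ is monotone, $Y_t - Y_{t+1} \le g(X_t) - g(X_{t+1})$ even when $X_{t+1}$ overshoots $s_{\min}$ (there $Y_{t+1} = 0$ but $g(X_{t+1}) \le g(s_{\min})$), so the per-step expected drift of $Y_t$ is at most $\delta'$ whenever $Y_t > 0$. Applying Theorem~\ref{thm:drift}(ii) to $Y_t$ with target $0$ then yields $E[T \mid X_0 = s_0] \ge Y_0/\delta' = (\ln(s_0)-\ln(s_{\min}))(1-\beta)/((1+\beta)\delta)$. The main technical obstacle is the large-jump term: the assumption $\min S = 1$ is essential here, because otherwise $g(s) - g(X_{t+1})$ could be unbounded in $s$ and the $1/\ln(s)$ factor in condition~(2) would fail to absorb it; similarly, any looser bound than $-\ln(1-x) \le x/(1-x)$ in the moderate-jump regime inflates the constant past $(1+\beta)/(1-\beta)$, so these two sharp choices are precisely what produce the stated constant.
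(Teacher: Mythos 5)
The paper does not prove this statement at all: Theorem~\ref{thm:multidriftlower} is imported verbatim from Witt (Theorem~2.2 in~\cite{Witt13j}) and used as a black box, so there is no in-paper proof to compare against. Your argument is, however, a correct and essentially self-contained reconstruction of the standard proof: the logarithmic potential, the split at the jump threshold $\beta s$, the bound $-\ln(1-x)\le x/(1-x)$ giving the $\delta/(1-\beta)$ term, the use of $\min S=1$ so that the large-jump overshoot is capped by $\ln(s)$ and absorbed by the $\beta\delta/\ln(s)$ hypothesis, and the final estimate $\delta/(1-\beta)+\beta\delta=\delta(1+\beta-\beta^2)/(1-\beta)\le\delta(1+\beta)/(1-\beta)$ all check out, and the truncated process $Y_t=\max\{\ln(X_t)-\ln(s_{\min}),0\}$ is a legitimate way to feed the result into Theorem~\ref{thm:drift}(ii), since $Y_t-Y_{t+1}\le\ln(X_t)-\ln(X_{t+1})$ and $Y_t$ lives in a finite subset of $\R_{\ge 0}$ hitting $0$ exactly at time $T$. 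This is the same route Witt takes in the cited source (drift of $\ln X_t$ plus an occupation/additive-drift argument), so you have not found a genuinely different proof, but you have supplied one where the paper supplies none.
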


\section{Main Result and Proof}

As described in Section~\ref{sec:parameters}, a combination of intuitive considerations and rigorous work made \cite{DoerrDE13,DoerrD15tight} suggest the parameter choice $\lambda = \lambda^* := \sqrt{\frac{\log(n) \log\log(n)}{\log\log\log(n)}}$, $p^* = \lambda^*/n$, and $c^* = 1 / \lambda^*$ for the optimization of the \onemax test function class, yielding an expected optimization time of $F^* = \frac{n \log n}{\lambda^*} = n \sqrt{\frac{\log(n) \log\log\log(n)}{\log\log(n)}}$. It was also proven that with $p$ and $c$ functionally depending on $\lambda$ as above, $\lambda = \Theta(\lambda^*)$ is the optimal choice and the only optimal choice. 

In this section, we complete this picture by proving rigorously that no combination of the parameters $p$, $c$, and $\lambda$, all possibly depending on $n$, can lead to an expected optimization time of asymptotic order strictly better than $F^*$. We also show that not many  parameter combinations can give this optimal expected runtime.

\begin{theorem}\label{thm:lower}
  Let $\lambda^* := \sqrt{\frac{\log(n) \log\log(n)}{\log\log\log(n)}}$ and $F^* = \frac{n \log n}{\lambda^*} = n \sqrt{\frac{\log(n) \log\log\log(n)}{\log\log(n)}}$. 
  \begin{itemize}
  \item For arbitrary parameters $\lambda \in [0..n]$, $p \in [0,1]$ and $c \in [0,1]$, all being functions on $n$, the \ga has an expected optimization time of $E[F] = \Omega(F^*)$. 
  \item If some parameter combination $(\lambda,p,c)$ leads to an expected optimization time of $E[F] = \Theta(F^*)$, then 
  \begin{itemize}
  \item $\lambda = \Theta(\lambda^*)$, 
  \item $p = \Omega(\lambda^*/n)$ and $p = (1/n)\exp(O(\sqrt{{\log(n) \log\log\log(n)} / {\log\log(n)}}\,))$, and 
  \item $c = \Theta(1/pn)$.  
  \end{itemize}
  \end{itemize}
\end{theorem}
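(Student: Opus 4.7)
The plan is first to reduce the optimization time $F$ to the number of main-loop iterations $T$, then to upper-bound the one-step fitness drift as a function of $(\lambda,p,c)$, and finally to case-analyse the resulting bound to deduce both $E[F]=\Omega(F^*)$ and the necessary parameter conditions. Each iteration of Algorithm~\ref{alg:GA} evaluates exactly $2\lambda$ search points, so $E[F]\ge \lambda E[T]$, with an overshoot of at most $2\lambda-1$ in the iteration in which the optimum is first seen. If $\lambda=\omega(F^*)$ the conclusion is immediate because a uniformly random $x_0$ is non-optimal with probability $1-2^{-n}$, so I may assume $\lambda=O(F^*)$ and focus on proving $E[T]=\Omega(F^*/\lambda)$. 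A Chernoff bound from Theorem~\ref{thm:chernoff} applied to the random initialisation provides the starting level $d_0:=n-f(x_0)\ge n/3$ with probability $1-e^{-\Omega(n)}$ for the subsequent drift argument.

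\medskip

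The heart of the proof is an iteration-wise upper bound on the drift $E[d_t-d_{t+1}\mid d_t=d]$. Condition on the common step size $\ell\sim\Bin(n,p)$ used for all $\lambda$ mutation offspring. The number $J_i$ of zero-bits that the $i$-th offspring flips is hypergeometric with mean $\ell d/n$, so Theorem~\ref{thm:hypergeomchernoff} controls the upper tail of $J^*:=\max_i J_i$. Conditional on $J^*$, the crossover phase retains each of the $\ell$ positions flipped by $x'$ independently with probability $c$; a further Chernoff bound (with Lemma~\ref{lem:condbinomial} used when conditioning on a large value) controls the number of kept zero-flips and kept one-flips across the $\lambda$ crossover offspring. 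Combining the two tail bounds and taking expectation over $\ell$ yields a drift bound of the form $E[d_t-d_{t+1}\mid d_t=d]\le \Delta(\lambda,p,c)\cdot d/n$ that factorises cleanly over the three parameters in the reasonable regime. Theorem~\ref{thm:multidriftlower} then converts this into $E[T]\ge \Omega(\log n/\Delta)$, so it remains to verify $\lambda\log n/\Delta(\lambda,p,c)\ge \Omega(F^*)$ in every parameter regime.

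\medskip

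The concluding case analysis is driven by the effective mutation rate $q:=pc$. If $qn\to 0$ the algorithm essentially never makes any useful flip after crossover, so a coupon-collector-type lower bound forces $E[T]=\omega(F^*/\lambda)$; if $qn\to\infty$ improving crossover offspring become super-polynomially rare and again $E[F]$ exceeds $F^*$. In the balanced range $qn=\Theta(1)$ the drift bound reproduces, up to constants, the upper bound of~\cite{DoerrD15tight}, giving $E[F]\ge \Omega(n\log n/\lambda+n\lambda\log\log\lambda/\log\lambda)$, which is minimised at $\lambda=\Theta(\lambda^*)$ with minimum value $\Theta(F^*)$; the three necessary conditions on $\lambda,p,c$ then fall out by tracking which of the inequalities were tight. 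The hard part will be this drift bound: the two max-over-$\lambda$ selection layers are coupled through the common $\ell$, and obtaining an estimate sharp enough that the upper bound $p\le (1/n)\exp(O(\sqrt{\log n\log\log\log n/\log\log n}))$ emerges cleanly requires a threshold argument on $J^*$ combined with the tail bound for sums of geometric variables announced at the start of Section~4, so as to avoid double-counting the lucky-$J^*$ and lucky-crossover events.
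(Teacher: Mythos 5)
Your overall scaffolding (reduce $F$ to $T$, bound the one-step drift, case-analyse) matches the spirit of the paper, but there is a genuine gap at the heart of your plan: a drift bound of the form $E[d_t-d_{t+1}\mid d_t=d]\le \Delta(\lambda,p,c)\cdot d/n$ fed into Theorem~\ref{thm:multidriftlower} can only ever yield $E[T]=\Omega(\log(n)/\Delta)$ and hence $E[F]=\Omega(\lambda\log(n)/\Delta)$, which in the balanced regime $pc=\Theta(1/n)$ gives the term $\Omega(n\log n/\lambda)$ but \emph{cannot} produce the second term $\Omega(n\lambda\log\log(\lambda)/\log(\lambda))$. Without that second term the bound is minimized by taking $\lambda$ large (e.g.\ $\lambda=\log n$ would give only $\Omega(n)$), so you neither obtain $E[F]=\Omega(F^*)$ nor the necessity of $\lambda=\Theta(\lambda^*)$. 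The paper gets this term from a separate \emph{additive} drift argument at linear fitness distance (Lemma~\ref{lem:lblambda}): when $d(x)=\Theta(n)$, the number of good bits surviving into the best of $\lambda$ crossover offspring is the maximum of $\lambda$ variables with mean $O(r)$, which is $O(r\log\lambda/\log\log\lambda)$ by the strong Chernoff bound plus Lemma~\ref{lem:integerexpectation}; crossing a distance of $\Theta(n)$ at this capped additive rate costs $\Omega(n\log\log\lambda/(r\log\lambda))$ iterations. This progress cap is \emph{not} proportional to $d/n$, so it does not ``factorise cleanly'' into your multiplicative template; you need the two drift regimes (multiplicative near the optimum, additive far from it) as genuinely separate arguments.

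A second, smaller gap: your coupon-collector case is keyed to the effective rate $q=pc$ tending to $0$, but the necessary condition $p=\Omega(\lambda^*/n)$ must come from a coupon-collector argument on the \emph{mutation} rate $p$ alone (a zero-bit never flipped in any mutation phase is never set to one in any mutation or crossover offspring), as in Lemma~\ref{lem:lbcoupon}. Moreover, because $\mut_\ell$ flips exactly $\ell$ bits without replacement, the events ``bit $i$ was flipped at least once'' are not independent; the paper has to prove a negative-correlation property by an induction-plus-coupling argument before the product bound $(1-n^{-4\alpha})^{n/3}$ is legitimate. Your sketch passes over this dependence entirely, and it is exactly the point where the ``classic'' coupon-collector argument breaks for this mutation operator.
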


We remark that \emph{the same lower bound holds for the natural modification of the \ga in which the best of all mutation and crossover offspring competes in the final selection step with the parent individual} (and not only the best crossover offspring). The proofs below are written up in a way that this is easy to check, but to keep the paper readable we do not explicitly formulate all statements for both version of the algorithm. Consequently, for the \onemax testfunction, this modification does not give an asymptotic runtime improvement. In a practical application, however, there is no reason to not exploit possible exceptionally good mutation offspring. So here this modification seems very advisable.

To ease the presentation, we shall always parameterize these values by $p = k/n$ and $c = r/k$ for some $k \in [0,n]$ and $r \in [0,k]$ (hence $k$ and $r$ may also depend on $n$). In this language, the previously suggested values are $k^* = \lambda^*$ and $r^* = 1$, and the main result of this work is that 
\begin{enumerate}
\item[(i)] no parameter setting gives a better expected optimization time than the $\Theta(F^*)$ stemming from these parameters, and 
\item[(ii)] any parameter tuple $(\lambda,k,r)$ that leads to an asymptotic optimization time of $\Theta(F^*)$ satisfies $\lambda = \Theta(\lambda^*)$, $k = \Omega(k^*)$ and $k = \exp(O(\sqrt{{\log(n) \log\log\log(n)} / {\log\log(n)}}\,))$, and $r = \Theta(r^*)$.
\end{enumerate}

A side remark: Another implicit parameter choice done in~\cite{DoerrDE13} is to use the same offspring population size $\lambda$ for the mutation phase and the crossover phase. One could well imagine having different numbers $\lambda_m$ and $\lambda_c$ of offspring for both phases. This may make sense in practical applications or when performing a theoretical analysis that takes care of constant factors. In this work, where we are only precise up to the asymptotic order of magnitude, the optimization time is of asymptotic order equal to the product of the number of iterations and $\max\{\lambda_m,\lambda_c\}$. Hence, unless one believes that a smaller offspring population size can reduce the number of iterations (which is not what our proofs suggest), there is for us no use of not taking both offspring population sizes equal to $\max\{\lambda_m,\lambda_c\}$.

\subsection{Overview of the Proof}\label{sec:overview}

Given apparent difficulty (see~\cite{DoerrD15tight}) of determining the runtime of the \ga already for settings $k = \lambda$ and $r=1$ suggested in~\cite{DoerrDE13}, the common approach of determining the optimal parameter settings by conducting a precise runtime analysis for all parameter combinations $(\lambda,k,r)$ seems not very promising. Therefore, shall rather analyze particular parts of  the optimization process in detail and from these extract necessary conditions for the parameters to allow an expected optimization time of order $O(F^*)$. To make it more visible how the different arguments work together, let us start with a brief overview of the analysis.

Let a tuple $(\lambda, p = k/n, c = r/k)$ as described above be given. We denote by $T$ the number of iterations the \ga with these parameters performs until an optimal solution is found (we have $T=0$ if the random initial search point is already optimal). We denote by $F$ the optimization time of this \ga, that is, the number of fitness evaluations performed until an optimal solution is evaluated. This is one if the random initial search point was optimal. We roughly have $F \approx 2 \lambda T$, but see Proposition~\ref{prop:lbtf} and the text around it for the details. 

We say that a tuple of parameters is \emph{optimal} if the resulting optimization time is $O(F^*)$. This is, for the moment, a slight abuse of language, but as this section will show, these are indeed the parameters that lead to the asymptotically optimal runtime, since (as we will see) no better runtime than $\Omega(F^*)$ can be achieved with any parameter setting. The proof of the Theorem~\ref{thm:lower} then consists of the following arguments, which all can be shown independent of the others. Since we aim at an asymptotic result only, we can freely assume that $n$ is sufficiently large.

\begin{itemize}
\item In Lemma~\ref{lem:lblambdalarge}, we make the elementary observation that $E[F] \ge \min\{\lambda,2^{n}\}/2$. Consequently, $\lambda \le 2 F^*$ in any optimal parameter set.

\item In Lemma~\ref{lem:lblargek}, we show that \[E[F] = \min\{\Omega(r^{-1} \exp(\Theta(r)) n \log n), \exp(\Omega(r)) n^2 \log n, \exp(\Omega(n^{1/16}))\}\] when $k \ge \sqrt n$ and $\lambda = \exp(o(n^{1/16}))$.  Since this runtime is at least $\Omega(n \log n)$, together with the previous item (showing that $\lambda$ cannot be too large), we obtain than $k \le \sqrt n$ in an optimal parameter set. 

\item In Lemma~\ref{lem:lbcoupon}, we show that for $0 < k \le n/12$, we have $E[F] = \Omega(\frac{n \log n}{k})$. Hence $k = \Omega(\lambda^*)$ in an optimal parameter setting.

\item In Lemma~\ref{lem:lbsmallk}, we show that when $\omega(1) = k \le \sqrt n$, then $E[F] =  \Omega(n \log n \min\{\tfrac{\exp(\Omega(r))}{\lambda r}, \tfrac{n^3}{\lambda}, \tfrac{\exp(\Omega(k))}{k}\})$. Since we know already that $\lambda \le n^3$ and $k = \omega(1)$ in an optimal parameter setting, this result  implies that an optimal parameter set has $\lambda = \Omega(\lambda^* \exp(\Omega(r))/r)$. 

\item In Lemma~\ref{lem:lblambdak}, we show $E[F] = \Omega(n \lambda / k)$ when $k \le n/4$ (which we know already). Consequently, in an optimal set of parameters $\lambda$ cannot be excessively large, e.g., $\lambda \le \exp(k/120)$.

\item In Lemma~\ref{lem:lblambda}, we show that if $k \le n/80$, $\lambda \le \exp(k/120)$, $\lambda = \exp(o(n))$, and $\lambda = \omega(1)$---all of this holds in an optimal parameter setting as shown above---then $E[F] = \Omega(\frac{n \lambda \log\log(\lambda)}{r \log \lambda})$. This result together with Lemma~\ref{lem:lbsmallk} implies that the optimal runtime is $\Theta(F^*)$ and that we have $\lambda = \Theta(\lambda^*)$ and $r = \Theta(1)$ in an optimal parameter setting.

\end{itemize}

This shows the main claim of this work, namely that $F^*$ is asymptotically the best runtime one can achieve with a clever choice of all parameters of the \ga. The above also shows that an optimal parameter set has $\lambda = \Theta(\lambda^*)$ and $r = \Theta(1)$. For the mutation probability, the above only yields $k = \Omega(\lambda^*)$ and $k = O(\sqrt n)$. In Lemma~\ref{lem:kvalue}, we show that $k = \exp(O(\sqrt{{\log(n) \log\log\log(n)} / {\log\log(n)}}\,))$ is a necessary condition for having a $\Theta(F^*)$ runtime.

We do not know if the interval of optimal $k$ values can be further reduced. An inspection of the upper bound proof in~\cite{DoerrD15tight} suggests that, with more effort than there, also slightly larger $k$-values than $\Theta(\lambda^*)$ (together with $\lambda = \Theta(\lambda^*)$ and $r = \Theta(1)$) could lead to the optimal expected runtime of $\Theta(F^*)$. We do not follow up on this question, because we do not feel that it justifies the effort of extending the technical proof of~\cite{DoerrD15tight}. It is quite clear that there is no algorithmic advantage of using a larger than necessary $k$-value. The main (unfavorable) difference would be that than an efficient implementation of the mutation operator in expected time $\Theta(k)$ would have an increased complexity.

We face two main difficulties in this proof. One are the apparent dependencies introduced by the two intermediate selection steps and the fact that all mutation offspring have the same Hamming distance from the parent. That the latter creates additional challenges can be easily seen in the lengthy proof of Lemma~\ref{lem:lbcoupon}, which simply tries to use the classic argument that one needs at least a total number of $\Theta(n \log n)$ bit-flips to make sure that each initially incorrect bit was flipped at least once. 

The second difficulty is that even parameter combinations that are far from those leading to the optimal runtime can lead to runtimes very close to the optimal one. An example (given here without proof) is that for say $k=\sqrt n$ and $\lambda = \lambda^*$ and $r = 1$, the optimization process strongly resembles the one of the $(1+\lambda)$ EA with $\lambda$ below the cut-off point. Consequently, the \ga for these parameters has an optimization time of $\Theta(n \log n)$, which is relatively close to $F^*$ given uncommonly large mutation probability. 

\subsection{Proofs}

This this longer subsection, we prove the results outlined above. We frequently use the following notation. For  $x \in \{0,1\}^n$, we call $d(x) := n - \OM(x)$ its \emph{fitness distance}. Let $x, x', y \in \{0,1\}^n$. Then \[g(x,x') := |\{ i \in [1..n] \mid x_i = 0 \wedge x'_i=1\}|\] is the number of  \emph{good bits of $x'$ (with respect to $x$)}. Analogously, \[b(x,x') := |\{ i \in [1..n] \mid x_i = 1 \wedge x'_i=0\}|\] is the number of  \emph{bad bits of $x'$ (with respect to $x$)}. Note that, trivially, $g(x,x') + b(x,x') = H(x,x')$, the Hamming distance of $x$ and $x'$. Similarly, we define ``the number of good bits of $x'$ that made it into $y$'' and ``the number of bad bits of $x'$ that made it into $y$'' by
\begin{align*}
  g(x,x',y) &:= |\{i \in [1..n] \mid x_i = 0 \wedge x'_i = 1 \wedge y_i = 1\}|,\\
  b(x,x',y) &:= |\{i \in [1..n] \mid x_i = 1 \wedge x'_i = 0 \wedge y_i = 0\}|.
\end{align*}

In the following, we always assume that we consider a run of the \ga with general parameter setting $\lambda$, $p = k/n$, and $c = r/k$, which may all depend on the problem size $n$. Since we are interested in an asymptotic result, we may assume that $n$ is sufficiently large. We use the variables of the algorithm description, e.g., $x$, $x^{(i)}$, $x'$, etc. without further explicit reference to the algorithm.

We now prove the ingredients forming the proof of the main result. We prove these results not only for the minimal parameter range needed in the proof of the main result, but rather for those ranges where the main arguments work well. At the same time, we do not aim at the absolutely widest parameter range and we occasionally do not aim at the sharpest possible bound if this would significantly increase the proof complexity. We aim at keeping the proofs of the partial results independent, both to ease reading and to allow an easier understanding how the main proof decomposes into the partial results. For this reason, all of the following lemmas are proven independently apart from possibly relying on the two elementary propositions~\ref{prop:lbsmallsteps} and~\ref{prop:lbtf}.

The first of these proposition is a technical tool showing that extraordinarily large fitness gains occurs rarely. This allows in the following to assume that the algorithm indeed once has a parent individual $x$ with roughly a certain fitness.

\begin{proposition}\label{prop:lbsmallsteps}
	Let $x$ be a search point with $d := d(x)$ satisfying $d \le 0.6n$. Then the probability that one iteration of the \ga with arbitrary parameter settings creates a search point $y$ with $d(y) \le d/2$, is $\lambda(\lambda+1) \exp(-\Omega(d))$.
\end{proposition}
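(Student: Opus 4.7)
My plan is a union bound over the $\lambda + \lambda^2 = \lambda(\lambda+1)$ natural candidate search points: the $\lambda$ mutation offspring $x^{(i)}$, plus, for each ordered pair $(i, i') \in [1..\lambda] \times [1..\lambda]$, the ``hypothetical'' crossover offspring $\tilde y^{(i,i')} := \cross_c(x, x^{(i)})$ that uses $x^{(i)}$ as crossover parent together with the $i'$-th batch of crossover coin flips. If $i_*$ denotes the (fitness-based) index for which $x' = x^{(i_*)}$, then $y^{(i')} = \tilde y^{(i_*, i')}$ almost surely, so $\{\exists i' : d(y^{(i')}) \le d/2\} \subseteq \bigcup_{i, i'} \{d(\tilde y^{(i,i')}) \le d/2\}$. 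This sidesteps conditioning on the fitness-dependent index $i_*$, which is otherwise the obstacle to marginalizing cleanly in the crossover phase.

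Each of the $\lambda(\lambda+1)$ candidates $y$ can be written as $x$ with a random subset $S$ of bit-positions flipped. Unpacking the operators gives $|S| \sim \Bin(n, p) = \Bin(n, k/n)$ for a mutation offspring, and, marginally over $\ell$ and the independent crossover coins, $|S| \sim \Bin(n, pc) = \Bin(n, r/n)$ for $\tilde y^{(i,i')}$; in both cases, conditionally on $F := |S|$, the set $S$ is uniformly distributed among size-$F$ subsets of $[1..n]$, so the number $z$ of zero-bits of $x$ inside $S$ has a hypergeometric distribution with parameters $(n, F, d)$ and mean $Fd/n$. Since $d(y) = d + F - 2z$, the event $\{d(y) \le d/2\}$ forces $z \ge (d + 2F)/4$, and the trivial constraints $z \le F$ and $z \le d$ further restrict $F$ to $[d/2, 3d/2]$.

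For $F$ in this range, a short endpoint check (at $F = d/2$ the gap is $d/2 - d^2/(2n) \ge 0.2 d$ and at $F = 3d/2$ it is $d - 3 d^2/(2n) \ge 0.1 d$, both using $d \le 0.6 n$; linearity in $F$ takes care of the rest) shows that $(d + 2F)/4 - Fd/n \ge 0.1 d$. The additive Chernoff bound for hypergeometric distributions (Theorem~\ref{thm:hypergeomchernoff} together with Theorem~\ref{thm:chernoff}\,(v)) then gives $\Pr[z \ge (d + 2F)/4 \mid F] \le \exp(-2 (0.1 d)^2 / F) \le \exp(-d/75)$, and marginalizing over $F$ preserves this since the event has probability $0$ for $F \notin [d/2, 3d/2]$. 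Multiplying by the $\lambda(\lambda+1)$ terms of the union bound yields the claim. The main obstacle is conceptual rather than computational: the fitness-based selection of $x'$ destroys any clean marginal identity for the actual crossover offspring, and it is exactly the resolution via the $\tilde y^{(i,i')}$ coupling that forces the quadratic factor $\lambda^2$ in the statement; once this coupling is in place, the hypergeometric tail estimate is routine, with the $d \le 0.6 n$ hypothesis being precisely what guarantees a positive $\Omega(d)$ gap uniformly over admissible $F$.
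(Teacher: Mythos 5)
Your proof is correct, and its central idea is exactly the paper's: you replace the fitness-selected crossover parent $x'$ by the full family of $\lambda^2$ hypothetical offspring $\cross_c(x,x^{(i)})$ (the paper calls this a Gedankenexperiment), which removes the selection-dependence and accounts for the $\lambda(\lambda+1)$ factor via a union bound. Where you diverge is in the per-offspring tail estimate. The paper observes that each candidate is marginally distributed as standard bit mutation of $x$ (with rate $k/n$ for mutation offspring and $(k/n)(r/k)=r/n$ for the hypothetical crossover offspring) and then invokes a separate Proposition~\ref{prop:mut}, whose proof splits into cases $0.1n \le d \le 0.6n$ versus $d \le 0.1n$, and within the latter $p \le 1/4$ versus $p \ge 1/4$, using different Chernoff bounds on the good and bad bits in each case. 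You instead condition on the number $F$ of flipped positions, note that $d(y)\le d/2$ is deterministically impossible unless $F\in[d/2,3d/2]$, and then apply a single additive Chernoff bound (Theorems~\ref{thm:hypergeomchernoff} and~\ref{thm:chernoff}) to the hypergeometric count of flipped zero-bits, with the hypothesis $d\le 0.6n$ delivering a uniform gap of $0.1d$ over the feasible range of $F$. Your route yields an explicit constant ($e^{-d/75}$) and avoids the case analysis entirely, at the price of the slightly more delicate observation that the feasibility constraint on $F$ is what makes a single uniform estimate possible; both arguments are sound and of comparable length.
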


To prove this proposition, we need the elementary fact that standard bit mutation hardly reduces $d(\cdot)$ by 50\% or more.

\begin{proposition}\label{prop:mut}
  Let $p \in [0,1]$, $x \in \{0,1\}^n$ with $d:=d(x) \le 0.6n$, and $y$ be obtained from flipping each bit of $x$ independently with probability $p$. Then $\Pr[d(y) \le 0.5 d] = \exp(-\Omega(d))$.
\end{proposition}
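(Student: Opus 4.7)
The plan is to observe that $d(y)$, the number of zero-bits of $y$, is a sum of $n$ independent Bernoulli random variables, so that a multiplicative Chernoff lower tail bound applies directly. The whole argument reduces to showing that $E[d(y)]$ exceeds $d/2$ by a constant factor, uniformly in $p \in [0,1]$.

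Write $A := \{i \in [1..n] : x_i = 0\}$ and $B := \{i \in [1..n] : x_i = 1\}$, so $|A| = d$ and $|B| = n - d$. Since each bit of $x$ is flipped independently with probability $p$, the indicator of the event $y_i = 0$ has distribution $\mathrm{Bern}(1-p)$ for $i \in A$ and $\mathrm{Bern}(p)$ for $i \in B$, and these $n$ indicators are independent. Hence $E[d(y)] = d(1-p) + (n-d)p = d + (n-2d)p$.

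I would then split on the size of $d$. If $d \le n/2$, the term $(n-2d)p$ is non-negative for every $p \in [0,1]$, so $E[d(y)] \ge d$ and therefore $d/2 \le \tfrac{1}{2} E[d(y)]$; Theorem~\ref{thm:chernoff}(\ref{multchernofflower}) applied with $\delta \ge 1/2$ gives $\Pr[d(y) \le d/2] \le \exp(-E[d(y)]/8) \le \exp(-d/8)$. If $n/2 < d \le 0.6 n$, the linear function $p \mapsto d + (n-2d)p$ is minimized on $[0,1]$ at $p = 1$, giving $E[d(y)] \ge n - d \ge 0.4 n$; combined with $d/2 \le 0.3 n \le \tfrac{3}{4} E[d(y)]$, the same Chernoff bound with $\delta \ge 1/4$ yields $\Pr[d(y) \le d/2] \le \exp(-E[d(y)]/32) \le \exp(-n/80)$, which is $\exp(-\Omega(d))$ because $d \le 0.6 n$.

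The delicate point to flag is the uniformity over $p$. For $p$ close to $1$ together with $d$ close to $0.6 n$, naive arguments that track only how many of the $d$ zero-bits of $x$ flip (e.g.\ ``at least $d/2$ of the zeros must flip, which is rare'') break down, since such an event is likely when $p$ is large. The bookkeeping above succeeds precisely by also counting the one-bits of $x$ that are flipped to zero: when $p$ is large these contribute roughly $n - d \ge 0.4 n$ new zeros, which already dominates $d/2$ thanks to the hypothesis $d \le 0.6 n$ (which is exactly what yields the constant gap $n - d \ge \tfrac{4}{3} \cdot d/2$). This is what allows a single Chernoff bound to cover both regimes cleanly.
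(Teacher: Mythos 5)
Your proof is correct, and it takes a genuinely more unified route than the paper's. The paper first separates the regime $0.1n\le d\le 0.6n$ (handled by the additive Chernoff bound via $E[d(y)]\ge\min\{d,0.4n\}$) from the regime $d\le 0.1n$, and in the latter it further splits on $p\le 1/4$ versus $p\ge 1/4$, tracking in the first subcase only the good bits $g(x,y)\sim\Bin(d,p)$ and in the second only the bad bits $b(x,y)$, using $d(y)=d-g+b$. You instead treat $d(y)$ directly as a sum of $n$ independent indicators, compute $E[d(y)]=d+(n-2d)p$ exactly, and observe that a single multiplicative lower-tail bound suffices once $E[d(y)]$ is bounded below by $d$ (for $d\le n/2$) or by $n-d\ge 0.4n$ (for $n/2<d\le 0.6n$); your arithmetic checks out, including the tight boundary case $d=0.6n$, $p=1$, where $d/2=\tfrac34 E[d(y)]$ exactly, so $\delta=1/4$ is admissible and $\exp(-n/80)=\exp(-\Omega(d))$ follows from $d\le n$. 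Your closing remark correctly identifies why the hypothesis $d\le 0.6n$ is the crux: it is precisely what keeps $E[d(y)]$ a constant factor above $d/2$ uniformly in $p$. What each approach buys: yours is shorter, needs only one Chernoff inequality, and yields explicit constants; the paper's separate bookkeeping of good and bad bit flips is slightly more roundabout here but matches the $g/b$ decomposition it reuses throughout the later lemmas.
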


\begin{proof}
  Let first $0.1n \le d \le 0.6n$. Then $E[d(y)] \ge \min\{d,0.4n\}$ regardless of $p$. Consequently, $\Pr[d(y) \le d/2] \le \Pr[d(y) \le E[d(y)] - 0.05n] \le \exp(-\Theta(n))$ by the additive Chernoff bound (Theorem~\ref{thm:chernoff}~\ref{addchernoff}). 
  
  Let now $d \le 0.1n$. Let $g := g(x,y)$ and $b = b(x,y)$. Trivially, we have $d(y) = d - g + b$. Let first $p \le 1/4$. Since $g$ is binomially distributed with parameters $d$ and $p$, we have $E[g] = dp \le d/4$ and $\Pr[g \ge d/2] \le \exp(-\Omega(d))$ by the multiplicative Chernoff bound (Theorem~\ref{thm:chernoff}~\ref{multchernofflower}). We thus have $\Pr[d(y) \le d/2] \le \Pr[g \ge d/2] \le \exp(-\Omega(d))$. Let now $p \ge 1/4$. Then $E[b] = (n-d)p \ge 0.225n$ and $\Pr[b \le 0.1n] \le \exp(-\Omega(n))$. Since trivially $g \le d \le 0.1n$, we have $\Pr[d(y) \le d/2] \le \Pr[b \le 0.1n] \le \exp(-\Omega(n))$. 
\end{proof}

\begin{proof}[Proof of Proposition~\ref{prop:lbsmallsteps}]
  To ease the calculations, we use the following Gedankenexperiment. Imagine that the \ga does not select a winning individual $x'$ at the end of the mutation phase, but instead creates $\lambda$ crossover offspring from each of the $\lambda$ mutation offspring. Clearly, the set of $\lambda$ crossover offspring from a true run of the algorithm is contained in this set of $\lambda^2$ offspring. Hence it suffices to show that none of the $\lambda^2$ offspring from the Gedankenexperiment and none of the $\lambda$ mutation offspring has a fitness distance of $d/2$ or better. 
  
  Let $\tilde y$ be a crossover offspring of the Gedankenexperiment. Let $\tilde x$ be the mutation offspring that was used in the crossover giving rise to $\tilde y$. Then $\tilde x$ is obtained from $x$ by flipping each bit independently with probability $k/n$---the \ga creates $\tilde x$ algorithmically different, namely by first sampling $\ell$ and then flipping $\ell$ bits, but the result is that $\tilde x$ has the distribution described above due to the choice of $\ell$. Now $\tilde y$ is obtained from a biased crossover of $x$ and $\tilde x$. Since each bit of $\tilde x$ makes it into $\tilde y$ only with probability $r/k$, we see that we have $\tilde y_i \neq x_i$ with probability $(k/n) \cdot (r/k) = r/n$ independently for all $i \in [1..n]$. Consequently, $\tilde y$ has the same distribution as if it was generated from $x$ by standard bit mutation with mutation rate $r/n$. 
  
  Since all mutation and crossover offspring are distributed as if generated via standard bit mutation (with some mutation rate that does not matter here), Proposition~\ref{prop:mut} and a simple union bound over the $\lambda(\lambda+1)$ mutation and crossover offspring  shows that with probability at least $1 - \lambda(\lambda+1) \exp(-\Omega(d))$ none of these has a fitness distance of $d/2$ or better.   
\end{proof}

The second proposition shows that, apart from exceptional cases, we can freely switch between the number of iterations $T$ and the number of fitness evaluations $F$ needed to find an optimum. This is a well-known fact, so we give its proof merely for reasons of completeness. Recall that the optimization time is defined to be the number of fitness evaluations until for the first time an optimal solution is evaluated. Consequently, if say the first mutation offspring by chance is an optimal solution, then the optimization time $F$ would be $2$. The number of iterations $T$, though, would be $1$, so the estimate $F = \Omega(\lambda T)$ is not valid. The following lemma shows this exceptional case only occurs for $E[T] < 2$, so that usually we can (and will without further notice) use the argument $E[F] = \Omega(\lambda E[T])$.

\begin{proposition}\label{prop:lbtf}
  If $E[T] \ge 2$, then $E[F] = \Theta(\lambda E[T])$.
\end{proposition}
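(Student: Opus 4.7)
The proof should rest on a deterministic relation between $F$ and $T$ that comes from simply counting fitness evaluations per iteration, followed by taking expectations and using the hypothesis $E[T] \ge 2$ to absorb the boundary terms.

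First I would establish the pointwise bounds
\[
  2\lambda(T-1) + 2 \;\le\; F \;\le\; 2\lambda T + 1
\]
whenever $T \ge 1$. The reasoning is that the initialization performs one evaluation, each completed iteration performs exactly $2\lambda$ evaluations (the $\lambda$ mutation offspring and the $\lambda$ crossover offspring), and iteration $T$ in which the optimum is first evaluated contributes at least $1$ and at most $2\lambda$ evaluations before stopping. Writing $F = 1 + 2\lambda(T-1) + j$ with $j \in [1..2\lambda]$ gives the two inequalities at once. Taking expectations yields
\[
  2\lambda\,(E[T]-1) + 2 \;\le\; E[F] \;\le\; 2\lambda\,E[T] + 1.
\]

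Next I would derive the upper and lower asymptotic bounds from these using $E[T] \ge 2$ and $\lambda \ge 1$ (the case $\lambda = 0$ is vacuous since then the algorithm cannot improve and $E[T] = \infty$ whenever the initial search point is not optimal). For the upper bound, $2\lambda E[T] + 1 \le 2\lambda E[T] + \lambda E[T] = 3\lambda E[T]$, so $E[F] = O(\lambda E[T])$. For the lower bound, the assumption $E[T] \ge 2$ gives $E[T] - 1 \ge E[T]/2$, hence
\[
  E[F] \;\ge\; 2\lambda(E[T]-1) + 2 \;\ge\; \lambda\,E[T] + 2 \;=\; \Omega(\lambda\,E[T]).
\]

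The argument is essentially bookkeeping, so there is no real technical obstacle. The only point that needs attention is the boundary case in which the optimum happens to be the very first offspring produced in the final iteration, which is exactly what forces the $-2\lambda$ correction in the pointwise lower bound on $F$; the hypothesis $E[T] \ge 2$ is precisely what makes this correction asymptotically harmless. In particular, if $E[T] = 1$ (e.g., the random initial $x$ is already optimal with probability $1/2$ and one iteration finds it otherwise) the claim genuinely fails, since a $\Theta(1)$ fraction of runs have $F = 1 \ll \lambda$; the condition $E[T] \ge 2$ rules out exactly this degeneracy.
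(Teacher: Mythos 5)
Your proof is correct and follows essentially the same route as the paper: the paper writes the deterministic relation as $T = \lceil (F-1)/2\lambda \rceil$, which is exactly your decomposition $F = 1 + 2\lambda(T-1) + j$ with $j \in [1..2\lambda]$, and then uses $E[T] \ge 2$ in the same way to turn $2\lambda(E[T]-1)$ into $\Omega(\lambda E[T])$. Your added remarks on the $\lambda=0$ degeneracy and on why $E[T]\ge 2$ is genuinely needed are sound but not part of the paper's (shorter) write-up.
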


\begin{proof}
  By definition of $F$ and $T$, we have $T = \lceil (F-1)/2\lambda \rceil \le (F-1)/2\lambda + 1$. Consequently, $F \ge 2(T-1)\lambda+1$ and $E[F] \ge E[2(T-1)\lambda+1] \ge 2(E[T]-1)\lambda \ge E[T] \lambda$ when $E[T] \ge 2$. Since $F \le 2\lambda T + 1$, we also have $E[F] = O(\lambda E[T])$.  
\end{proof}

We now start proving a number of lower bounds for the runtime of the \ga. They do not logically rely on each other. The first result shows that, unless $\lambda$ is excessively large, the expected optimization time is at least $\Omega(\lambda)$. 

\begin{lemma}\label{lem:lblambdalarge}
  $E[F] \ge \min\{\lambda,2^{n}\}/2$.
\end{lemma}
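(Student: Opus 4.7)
The plan is to obtain the lower bound by a union bound over the first $\lambda+1$ fitness evaluations---the initial $x$ followed by the $\lambda$ mutation offspring $x^{(1)}, \dots, x^{(\lambda)}$ of the first iteration---combined with the standard tail sum for the expectation of a non-negative integer-valued random variable.

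The first step is to verify that each of these first $\lambda+1$ evaluated search points has uniform marginal distribution on $\{0,1\}^n$, so that the probability that any specific one equals the (unique) optimum $z^*$ of $\OM_{z^*}$ is exactly $2^{-n}$. The initial $x$ is uniform by construction. For a mutation offspring $x^{(i)} = \mut_\ell(x)$, I condition on $\ell$: the operator then flips a uniformly chosen $\ell$-subset $S$ of positions, so $x^{(i)} = x \oplus \mathbb{1}_S$; since $x$ is uniform and $S$ is independent of $x$, $x^{(i)}$ is uniform conditional on $\ell$ and hence unconditionally. Note that the fact that all $\lambda$ mutation offspring share the same $\ell$ is irrelevant for the marginal distribution of any \emph{single} one of them. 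A union bound then gives $\Pr[F \le k] \le k/2^n$ for every $k \in [1..\lambda+1]$, equivalently $\Pr[F \ge j] \ge 1 - (j-1)/2^n$ for every $j \in [1..\lambda+2]$.

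The second and final step is the calculation. Using the tail formula $E[F] = \sum_{j=1}^\infty \Pr[F \ge j]$ together with the bound above,
\[
E[F] \;\ge\; \sum_{j=1}^{\lambda+2} \max\{0,\, 1 - (j-1)/2^n\}.
\]
If $\lambda+1 \le 2^n$, every summand is non-negative and the sum evaluates to $(\lambda+2)\bigl(1 - (\lambda+1)/(2\cdot 2^n)\bigr) \ge (\lambda+2)/2 \ge \lambda/2$. If $\lambda+1 > 2^n$, only the first $2^n$ summands are positive and the sum equals $2^n - (2^n-1)/2 = (2^n+1)/2 > 2^n/2$. In either case, $E[F] \ge \min\{\lambda,2^n\}/2$, as claimed.

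There is no substantial obstacle: the lemma is an elementary consequence of the marginal uniformity of the initial point and of each offspring produced in the first mutation phase, together with a summation of tail probabilities. The only point that warrants explicit justification is this marginal uniformity, since all $\lambda$ mutation offspring are coupled through a common step size $\ell$; the observation above shows that this coupling does not affect the distribution of any individual offspring.
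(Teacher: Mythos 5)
Your proof is correct and follows essentially the same route as the paper's: establish that the initial search point and each first-iteration mutation offspring is marginally uniform on $\{0,1\}^n$, apply a union bound to get $\Pr[F \ge j] \ge 1 - (j-1)2^{-n}$, and sum the tail probabilities. The only cosmetic differences are your (equally valid) XOR-with-an-independent-subset argument for marginal uniformity, where the paper instead identifies the offspring distribution with standard bit mutation and computes $\Pr[\tilde x_i = 1] = 1/2$ directly, and a slightly different truncation point in the tail sum.
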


\begin{proof}
  The proof builds on the following simple observation: Let $\tilde x$ be a mutation offspring generated in the first iteration. Then $\tilde x$ is uniformly distributed in $\{0,1\}^n$. Indeed, let $x$ be the random initial search point, which is uniformly distributed in $\{0,1\}^n$, which is equivalent to saying that each $x_i$ independently is equal to $1$ with probability $1/2$ (and is equal to $0$ otherwise). Now $\tilde x$ is generated from $x$ by flipping each bit independently with probability $k/n$. Consequently, the bits of $\tilde x$ are independent. We also compute $\Pr[\tilde x_i = 1] = \Pr[x_i = 0] (k/n) + \Pr[x_i = 1] (1 - k/n) = 1/2$. Hence $\tilde x$ is uniformly distributed in $\{0,1\}^n$.

  With this preliminary consideration, the proof of the lemma is very easy. Let $L$ be a non-negative integer. Let $x_0, x_1, \dots, x_{L}$ be the initial random search point and the first $L$ mutation offspring. Note that each of these search points individually is uniformly distributed in $\{0,1\}^n$. Consequently, by a simple union bound, the probability that one of these search points is the optimum is at most $(L+1) 2^{-n}$. In other words, the number $F$ of fitness evaluations until an optimal solution is found, satisfies $\Pr[F \ge L+2] \ge 1 - (L+1) 2^{-n}$ for all $0 \le L \le \lambda$. By Lemma~\ref{lem:integerexpectation}, taking $K = \min\{\lambda+1,2^{n}\}$, we compute $E[F] = \sum_{i=1}^\infty \Pr[F \ge i] \ge \sum_{i=1}^{K} \Pr[F \ge i] \ge \sum_{i=1}^{K} (1 - (i-1) 2^{-n}) = K - \frac{K(K-1)}{2} 2^{-n} = K (1-2^{-n-1}(K-1)) \ge \min\{\lambda,2^n\}/2$. 
\end{proof}

We proceed by regarding the case that $k$ is large, say $k \ge \sqrt n$. While this is much larger than all values of $k$ that lead to the optimal expected runtime, the proof is not very simple. The reason is that even such large values for $k$ can give a near-optimal runtime of $O(n \log n)$ for suitable choices of the other parameters, e.g., small values for $\lambda$ and $r = 1$ (we do not prove this statement). 

\begin{lemma}\label{lem:lblargek}
  If $k \ge \sqrt n$ and $\lambda = \exp(o(n^{1/16}))$, then \[E[F] = \min\{\Omega(r^{-1} \exp(\Theta(r)) n \log n), \exp(\Omega(r)) n^2 \log n, \exp(\Omega(n^{1/16}))\},\] which attains its asymptotically optimal value $\Omega(n \log n)$ for $r = \Theta(1)$.  
\end{lemma}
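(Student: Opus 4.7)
I would exploit the marginal observation from the Gedankenexperiment in the proof of Proposition~\ref{prop:lbsmallsteps}: regardless of the intermediate selection of $x'$, every crossover offspring is marginally distributed as the parent $x$ perturbed by standard bit mutation with rate $r/n$, and every mutation offspring as bit mutation with rate $k/n$. This reduces the analysis of a single iteration to tail bounds on the $\lambda(\lambda+1)$ offspring treated individually via a union bound, trading the dependency structure induced by the two intermediate selection steps for a $\lambda^2$ multiplicative slack that is absorbed by the hypothesis $\lambda = \exp(o(n^{1/16}))$. Once this reduction is in place, the statement is established by proving three independent lower bounds on $E[F]$ and taking their minimum.

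\textbf{Three separate lower bounds.} For the first bound, $\Omega(r^{-1}\exp(\Theta(r))\, n \log n)$, I would apply the multiplicative drift lower bound (Theorem~\ref{thm:multidriftlower}) to $d(x)$. A single crossover offspring satisfies $E[d(y) - d(x)] = (r/n)(n - 2d(x))$, and Chernoff bounds on the signed binomial counting $g(x,y) - b(x,y)$ show that an improvement of size $\Delta$ from any one offspring has probability of order $(d/n)^{\Theta(\Delta)} \exp(-\Theta(r))$; a union bound over $\lambda(\lambda+1)$ offspring combined with Theorem~\ref{thm:multidriftlower} yields the iteration lower bound, which becomes the claimed evaluation lower bound after multiplying by $\lambda$ via Proposition~\ref{prop:lbtf}. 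For the second bound, $\exp(\Omega(r))\, n^2 \log n$, I would sharpen the above at small distances $d = O(1)$: any improvement then requires the rate-$r/n$ bit mutation to produce a rare event of probability $\exp(-\Omega(r))$, and additive drift (Theorem~\ref{thm:drift}) on this very slow process delivers the quadratic factor. For the third bound, $\exp(\Omega(n^{1/16}))$, I would apply Proposition~\ref{prop:lbsmallsteps} directly: from the random initial search point with $d \approx n/2$, the fitness distance must be halved $\Theta(\log n)$ times before reaching the optimum, and the first halving (from $d \ge n^{1/16}$) has per-iteration probability at most $\lambda(\lambda+1)\exp(-\Omega(n^{1/16})) = \exp(-\Omega(n^{1/16}))$ under our hypothesis on $\lambda$, so the expected number of iterations until any such halving occurs already dominates.

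\textbf{Main obstacle.} The hardest part will be making the per-iteration drift upper bound hold \emph{uniformly in $d$}, so that multiplicative drift actually applies, while simultaneously arranging for a single Chernoff estimate on $g(x,y) - b(x,y)$ to produce the correct $\exp(\Theta(r))$ exponent across the full range of moderate-to-large $r$. The Gedankenexperiment resolves the intermediate-selection issue but introduces a $\lambda^2$ slack; keeping this slack below the $\exp(\Omega(n^{1/16}))$ threshold, and choosing the improvement threshold $\Delta$ adaptively in $r$ so as to capture the two-sided tail of the signed binomial sharply, is the computational core of the argument and is what determines the specific exponents appearing in the three bounds.
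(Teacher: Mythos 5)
There is a genuine gap, and it sits in the first (and only essential) term of the minimum. Your plan reduces everything to the Gedankenexperiment's marginal observation and a union bound over all $\lambda(\lambda+1)$ offspring, claiming the resulting $\lambda^2$ slack is ``absorbed by $\lambda = \exp(o(n^{1/16}))$''. That is true for the $\exp(\Omega(n^{1/16}))$ term, but not for the term $\Omega(r^{-1}\exp(\Theta(r))\,n\log n)$, which contains no $\lambda$ at all: your route gives a per-iteration drift of order $\lambda^2 \cdot \tfrac{dr}{n}\exp(-\Omega(r))$, hence $E[T] = \Omega\bigl(\tfrac{n\log n\,\exp(\Omega(r))}{\lambda^2 r}\bigr)$ and, after Proposition~\ref{prop:lbtf}, only $E[F] = \Omega\bigl(\tfrac{n\log n\,\exp(\Omega(r))}{\lambda r}\bigr)$ --- weaker by a factor of $\lambda$. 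This loss is fatal: since $\lambda$ may be as large as $\Theta(F^*)$, the weakened bound drops far below $F^*$ and can no longer exclude $k \ge \sqrt n$ in Theorem~\ref{thm:lower}. A telling symptom is that your sketch never uses the hypothesis $k \ge \sqrt n$ (the marginal crossover distribution is rate $r/n$ for every $k$), yet the lemma is false without it --- for $k = \lambda = \lambda^*$, $r=1$ the true runtime is $F^* = o(n\log n)$. The paper recovers the missing factor of $\lambda$ precisely by exploiting $k \ge \sqrt n$ at distances $d \in [n^{3/4}, n^{7/8}]$: with high probability \emph{every} mutation offspring, hence the selected $x'$, has fewer than $2dk/n$ good bits and at least $k/4$ bad bits; conditional on this ``regular situation'' one union-bounds only over the $\lambda$ actual crossover offspring, each of which beats $x$ only by inheriting a good bit (probability $O(dr/n)$) while dodging $\Omega(k)$ bad bits each kept with probability $r/k$ (probability $\exp(-\Omega(r))$).

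The mechanisms you propose for the other two terms are also misattributed. The $\exp(\Omega(n^{1/16}))$ term does not follow from ``halvings are rare'': Proposition~\ref{prop:lbsmallsteps} only bounds the probability that the distance halves \emph{within a single iteration}, and says nothing about gradual progress by one unit per iteration; if your argument were valid it would prove exponential runtime for every parameter setting, contradicting the known $\Theta(F^*)$ upper bound. In the paper this term comes from the regime $r \ge n^{1/16}$, where every crossover offspring inherits so many bad bits from $x'$ that the drift is only $\lambda\exp(-\Omega(n^{1/16}))$. Likewise, the $\exp(\Omega(r))\,n^2\log n$ term does not come from distances $d = O(1)$ (there the improvement probability is $\Theta(dr/n\cdot e^{-r})$ per offspring, which yields nothing near $n^2\log n$ for constant $r$); it arises in the regime $r = O(1/n)$, where the $\frac{dr}{n}$ term in the drift is dominated by the $n^{-2}$ contribution of the event that $48$ or more good bits are copied, all evaluated over the same multiplicative-drift window $[n^{3/4}, n^{7/8}]$. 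The overall skeleton you propose (drift bound per iteration, Theorem~\ref{thm:multidriftlower}, then Proposition~\ref{prop:lbtf}) matches the paper, but the case analysis driving the three regimes needs to be organized by the size of $r$, not by the size of $d$, and the selection of $x'$ must be handled explicitly rather than union-bounded away.
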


\begin{proof}
  We start by analyzing the progress the \ga makes in one iteration starting with a search point $x$ having fitness distance $d := d(x) \in [n^{3/4},n^{7/8}]$. More precisely, denote by $z$ an individual with maximal fitness among all mutation and crossover offspring generated in this iteration and among the parent $x$. Needless to say, $z$ can be the parent $x$, the crossover winner $y$, or the mutation winner $x'$. To use drift analysis, we shall regard the progress $d(x) - d(z)$. Note that this is $0$ if $\max\{f(x'),f(y)\} \le f(x)$. Note also that $d(x)-d(z) = f(z)-f(x)$.
  
  Let $\tilde x$ be a mutation offspring. Let $\tilde g = g(x,\tilde x)$  be the number of good bits of~$\tilde x$. Since $\tilde g$ follows a binomial distribution with parameters $d$ and $k/n$, we have $E[\tilde g] = dk/n \ge n^{1/4}$ and $\Pr[\tilde g \ge 2dk/n] \le \exp(-(dk/n)/3) \le \exp(-\Omega(n^{1/4}))$. Hence only with probability at most $\lambda \exp(-\Omega(n^{1/4}))$, there is a mutation offspring with at least $2dk/n$ good bits; in this rare case we estimate the progress $f(z) - f(x)$ via the trivial bound $f(z) - f(x) \le n$. Similarly, in the exceptional case that $\ell < k/2$, which occurs with probability at most $\exp(-\Omega(k)) \le \exp(-\Omega(n^{1/2}))$, we again estimate $f(z) - f(x) \le n$.
  
  Hence let us now analyze the progress in the \emph{regular situation} that no mutation offspring has $2dk/n$ good bits or more (and thus $g(x,x') < 2dk/n$) and that $\ell \ge k/2$  (and thus $x'$ has at least $b(x,x') \ge \ell - (2dk/n) \ge (k/2)-(2dk/n) = k ((1/2) - 2n^{-1/8}) \ge k/4$ bad bits). Since $b(x,x') > g(x,x')$, we have $z \neq x'$, so it remains to analyze the crossover offspring. Consider an offspring $\tilde y$ generated in the crossover phase. 
  
  Let us consider first \emph{the case that $r \ge n^{1/16}$}. Then $\tilde b := b(x,x',\tilde y)$ satisfies $E[\tilde b] \ge (k/4) \cdot (r/k) = r/4$. Hence with probability $1 - \exp(-\Omega(r)) \ge 1 - \exp(-\Omega(n^{1/16}))$, the crossover offspring $\tilde y$ has taken at least $k/8$ bad bits from $x'$. This is more than the number of good bits $x'$ has, so regardless of how many good bits make it into $\tilde y$, we have $f(\tilde y) \le f(x)$. Consequently, with probability $1 - \lambda \exp(-\Omega(n^{1/16}))$, no crossover offspring has a fitness better that $x$, and hence $f(z) = f(x)$. For the remaining probability $\lambda \exp(-\Omega(n^{1/16}))$, we estimate $f(z) - f(x) \le n$. In total, if $r \ge n^{1/16}$, we have $E[f(z)-f(x)] \le n \lambda \exp(-\Omega(n^{1/16})) = \lambda \exp(-\Omega(n^{1/16}))$.
  
  We now turn to \emph{the case that $r < n^{1/16}$}. In this case,  $\tilde g := g(x,x',\tilde y)$ satisfies $E[\tilde g] \le (2dk/n) \cdot (r/k) = 2dr/n \le 2n^{-1/16}$. We regard separately the situations that $\tilde g = 0$, $\tilde g \in [1..47]$, $\tilde g \in [48..\lfloor E[\tilde b]/2 \rfloor]$, and $\tilde g \ge E[\tilde b]/2$. Clearly, when $\tilde g = 0$, we have $f(\tilde y) \le f(x)$. Markov's inequality shows that good bits exist only with probability $E[\tilde g] \le 2dr/n$, hence, $\Pr[\tilde g \in [1..47]] \le \Pr[\tilde g \ge 1] \le 2dr/n$. Conditioning on $\tilde y$ having between one and $47$ good bits, we trivially observe $f(\tilde y) - f(x) \le 47$. However, for $\tilde y$ to have a fitness better than $f(x)$, it is necessary (but not sufficient) that at most $46$ bad bits are copied from $x'$ to $\tilde y$. The probability of this event, which is independent of any event regarding good bits only, is at most $\exp(-\Omega(E[\tilde b])) \le \exp(-\Theta(r))$, because the expected number $E[\tilde b]$ of bad bits copied into $\tilde y$ is $\Theta(r)$. By Theorem~\ref{thm:chernoff}~\ref{multchernoffupperstrong}, the probability that $48$ or more good bits are copied into $\tilde y$ is $O(n^{-3})$, hence $\Pr[\tilde g \in [48..\lfloor E[\tilde b]/2 \rfloor]] \le \Pr[\tilde g \ge 48] = O(n^{-3})$. In this situation, for $f(\tilde y)$ to be larger than $f(x)$, we need $\tilde b < \tilde g \le E[\tilde b]/2$, which happens with probability $\exp(-\Omega(E[\tilde b])) \le \exp(-\Omega(r))$. Finally, if $E[\tilde b]/2 \ge 48$, then the probability that $\tilde g \ge E[\tilde b]/2$ is at most $n^{-3-\Omega(E[\tilde b])} \le n^{-3-\Omega(r)}$ by Theorem~\ref{thm:chernoff}~\ref{multchernoffupperstrong}. Hence
\begin{align*}
  E[\max\{f(\tilde y) - f(x),0\}] & \le \Pr[\tilde g = 0] \cdot 0\\
  & \quad + \Pr[\tilde g \in [1..47]] \exp(-\Omega(r)) \cdot 47\\
  & \quad + \Pr[\tilde g \in [48..\lfloor E[\tilde b]/2 \rfloor]] \exp(-\Omega(r)) \cdot n\\
  & \quad + \Pr[\tilde g \ge E[\tilde b]/2 \mid E[\tilde b]/2 \ge 48] \cdot n\\
  & \le 0 + \tfrac{2dr}{n} \exp(-\Omega(r)) \cdot 47 + O(n^{-3}) \exp(-\Omega(r)) \cdot n + n^{-3-\Omega(r)} \cdot n\\
  & \le O((\tfrac{dr}{n} + n^{-2}) \exp(-\Omega(r)).
\end{align*}
  
   Since $y$ is chosen among the crossover offspring $\tilde y$ such that $f(\tilde y)$, and equivalently, $f(\tilde y) - f(x)$ is maximal, we have $f(y) - f(x) \le \sum_{\tilde y} \max\{f(\tilde y) - f(x),0\}$, where $\tilde y$ runs over all $\lambda$ crossover offspring. Consequently, $E[f(y) - f(x)] = O(\lambda (\frac{dr}{n} +n^{-2}) \exp(-\Omega(r)))$.
  
  Taking the two cases regarded separately together, we see that for any $r$ we have $E[f(z) - f(x)] = E[\max\{0,f(y) - f(x)\}] = \max\{O(\lambda (\frac{dr}{n}+n^{-2}) \exp(-\Omega(r))), \lambda \exp(-\Omega(n^{1/16}))\}$, when we condition on being in the regular situation. In the general situation, we have $E[f(z) - f(x)] = \lambda \exp(-\Omega(n^{1/4}) n + (1 - \lambda \exp(-\Omega(n^{1/4})) \max\{O(\lambda (\frac{dr}{n}+n^{-2}) \exp(-\Omega(r))), \lambda \exp(-\Omega(n^{1/16}))\} = \max\{O(\lambda (\frac{dr}{n}+n^{-2}) \exp(-\Omega(r))), \lambda \exp(-\Omega(n^{1/16}))\}$. To ease the following multiplicative drift argument, we estimate this bluntly by $E[f(z)-f(x)] \le \max\{O(\lambda (\frac{dr}{n}+dn^{-2})) \exp(-\Omega(r))), d\lambda \exp(-\Omega(n^{1/16}))\} = d \max\{O(\lambda \max\{r,n^{-1}\} \exp(-\Omega(r)) / n), \lambda \exp(-\Omega(n^{1/16}))\}$. 
  
  Building on this drift statement, we now use Witt's lower bound result for multiplicative drift (Theorem~\ref{thm:multidriftlower}). Consider a run of the \ga. For $t = 0, 1, \dots$, denote by $x_t$ the search point $x$ at the beginning of the $(t+1)$st iteration except if before that once an optimal solution was generated, in this case let $x_t$ be any optimal solution. By Proposition~\ref{prop:lbsmallsteps}, with probability at least $1 - \lambda(\lambda+1) \exp(-\Omega(n^{7/8}))$ the \ga at some time $t_0$ reaches a search point $x_{t_0}$ with $d(x_{t_0}) \in [0.5 n^{7/8},n^{7/8}]$. We show that in this case, we have an expected optimization time as claimed, which implies that also the unconditioned expectation is of the same order of magnitude. 
  
  For $t = 0, 1, \dots$ define $X_t = \max\{d(x_{t_0+t}),1\}$. Observe that $X_{t+1} \le X_t$ for all $t \ge 0$. Let $s_{\min} := n^{3/4}$. Then we have shown above that if $X_t = s > s_{\min}$, then $E[X_t - X_{t+1}] \le s \max\{K_1 \lambda \max\{r,n^{-1}\} \exp(-K_2 r) / n, \lambda \exp(-K_3 n^{1/16}))\}$ for some absolute constants $K_1, K_2, K_3$. Note that the drift of the process $X_t$ might be smaller than this, because above we took $z$ as the best individual among parent and all individuals generated in the iteration. The first condition of the drift theorem thus is fulfilled with $\delta = \max\{K_1 \lambda \max\{r,n^{-1}\} \exp(-K_2 r) / n, \lambda \exp(-K_3 n^{1/16}))\}$. From Proposition~\ref{prop:lbsmallsteps} we know that $\Pr[X_{t+1} \le s/2] \le \lambda(\lambda+1) \exp(-\Omega(s))=\exp(-\Omega(s))$. Hence for $n$ (and thus also $s$) sufficiently large, also the second condition of the drift theorem is satisfied (with $\beta = 1/2$); also we have $E[T] = \Omega(\log n)$ to enable the argument $E[F] = \Omega(\lambda E[T])$ below. We may thus apply the theorem and derive that the first $t$ such that $X_t \le s_{\min}$ satisfies $E[t] = \Omega(\frac{\ln(X_0) - \ln(s_{\min})}{\delta}) = \Omega(\min\{\frac{\exp(\Theta(r) n \log n}{\max\{r,1/n\} \lambda}), \exp(\Omega(n^{1/16}))/\lambda\})$. Note that this, naturally, is a lower bound on $E[T]$. Consequently, $E[F] = \Omega(\lambda E[T]) = \Omega(\min\{\frac{\exp(\Omega(r))}{r} n \log n, \exp(\Omega(r)) n^2 \log n, \exp(\Omega(n^{1/16}))\}) $.
\end{proof}

The following lower bound imitates the classic argument that if in all applications of the mutation operators not enough bits are flipped, then there will be a bit that is initially zero and that was never touched in a mutation operation. The proof is slightly more involved as usual for this type of argument, because our mutation operator uses a hypergeometric distribution.

\begin{lemma}\label{lem:lbcoupon}
  Let $0 < k \le n/12$ and $k \lambda = o(n \log n)$. Let $\alpha < 1/4$. Let $t = \lfloor\alpha n \ln(n) / (k\lambda) \rfloor$. Then $\Pr[T \le t] = \exp(-\Omega(\min\{k t, n^{1-4\alpha}\}))$. In particular, $E[F] = \Omega(\frac{n \log n}{k})$. Consequently, an optimal parameter setting satisfies $k = \Omega(\sqrt{\log(n) \log\log(n) / \log\log\log(n)}) = \Omega(\lambda^*)$.
\end{lemma}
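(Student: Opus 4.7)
The plan is to reduce the statement to a coupon-collector argument on the set of bit positions ``touched'' by the mutation operator. Let $U$ be the set of positions at which the initial search point equals $0$, and let $T_{\text{mut}}$ be the set of positions that appear in the support of at least one of the $t\lambda$ invocations of $\mut_\ell$ during the first $t$ iterations. I first show by induction on the iteration number that if any search point evaluated in the first $t$ iterations has a $1$ at a position $i \in U$, then $i \in T_{\text{mut}}$: the only way crossover introduces a $1$ at position $i$ is via a mutation offspring $x'$ with $x'_i = 1$, and $x'_i = 1$ when the current $x$ satisfies $x_i = 0$ forces the $\mut_\ell$ call that produced $x'$ to have flipped position $i$. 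Hence $T \le t$ implies $U \subseteq T_{\text{mut}}$, and it suffices to upper bound $\Pr[U \subseteq T_{\text{mut}}]$.

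I would then control $T_{\text{mut}}$ via two Chernoff applications. The sum $\sum_s \ell_s \sim \Bin(tn,k/n)$ has mean $tk$, so $\sum_s \ell_s \le 2tk$ fails with probability $\exp(-\Omega(tk))$. Using the $d \ge 6E[X]$ bound of Theorem~\ref{thm:chernoff} (valid because $6k \le n/2$) and a union bound over $s \le t$, every $\ell_s \le n/2$ holds except with probability at most $t\cdot 2^{-n/2} = \exp(-\Omega(n))$. Call the joint favorable event $\mathcal{E}$. On $\mathcal{E}$, for each fixed $i$,
\[ \Pr[i \notin T_{\text{mut}} \mid \ell_1,\dots,\ell_t] = \prod_{s=1}^t (1 - \ell_s/n)^\lambda \ge \exp\!\left(-\tfrac{2\lambda}{n}\sum_s \ell_s\right) \ge \exp(-4\alpha \ln n) = n^{-4\alpha}, \]
using $\ln(1-x) \ge -2x$ on $[0,1/2]$. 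Summing over $i$, the conditional mean $\mu := E[\,n - |T_{\text{mut}}|\,\mid\,\ell\,]$ is at least $n^{1-4\alpha}$ on $\mathcal{E}$.

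The next step is to turn this conditional expectation into a high-probability statement. Within one invocation of $\mut_\ell$, the indicators $(\mathbf{1}[i \notin S])_{i=1}^n$ for the uniformly chosen $\ell_s$-subset $S$ are negatively associated; the $t\lambda$ invocations are mutually independent; and for each position $i$, $Y_i := \mathbf{1}[i \notin T_{\text{mut}}]$ is a coordinate-wise non-decreasing function (a product of $\{0,1\}$-indicators) of variables sitting in disjoint index blocks, so the family $(Y_i)_{i=1}^n$ inherits negative association. The standard Chernoff bound for NA sums then gives $\sum_i Y_i \ge n^{1-4\alpha}/2$ with conditional probability at least $1 - \exp(-\Omega(n^{1-4\alpha}))$. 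Now $U$ is independent of $T_{\text{mut}}$ and each bit lies in $U$ with probability $1/2$ independently, so $\Pr[U \subseteq T_{\text{mut}} \mid T_{\text{mut}}] = 2^{-(n-|T_{\text{mut}}|)}$; combining with the lower bound on $n - |T_{\text{mut}}|$ yields $\Pr[U \subseteq T_{\text{mut}}] \le \exp(-\Omega(\min\{tk, n^{1-4\alpha}\}))$, which is the stated tail bound on $T$. The ``in particular'' conclusion follows by fixing any $\alpha < 1/4$ (say $\alpha = 1/8$): since $k\lambda = o(n\log n)$ we have $t \to \infty$, hence $\Pr[T \le t] \le 1/2$ for large $n$, so $E[T] \ge t/2 = \Omega(n \log(n)/(k\lambda))$, and Proposition~\ref{prop:lbtf} converts this to $E[F] = \Omega(n \log(n)/k)$.

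The main obstacle, as flagged in the paragraph preceding the lemma, is that $\mut_\ell$ samples its $\ell$ positions \emph{without replacement}, so the ``position $i$ not touched'' events are dependent across $i$ and the textbook coupon collector analysis (which relies on independent draws) does not apply directly. This is circumvented by the negative-association Chernoff bound above, together with the two preliminary Chernoffs handling the randomness of $\ell_s$; these latter bounds are precisely what produces the $\min\{tk, n^{1-4\alpha}\}$ rate, the former term coming from the tail on the total number of bit-flips and the latter from the concentration of the number of untouched positions.
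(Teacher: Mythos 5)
Your proof is correct, and while its skeleton (reduce the event $T\le t$ to ``every initially-zero bit is touched by some mutation'', two Chernoff bounds giving $\sum_s\ell_s\le 2tk$ and $\ell_s\le n/2$, and the per-bit survival probability $\prod_s(1-\ell_s/n)^\lambda\ge n^{-4\alpha}$) coincides with the paper's, your handling of the dependence caused by sampling without replacement is genuinely different. The paper first fixes the $\ge n/3$ initially-zero positions and then proves, by a bespoke induction-plus-coupling argument, the negative correlation inequality $\Pr[\bigcap_{i\in I}A_i]\le\prod_{i\in I}\Pr[A_i]$ for the events ``bit $i$ flipped at least once'', which directly yields the bound $(1-n^{-4\alpha})^{n/3}\le\exp(-n^{1-4\alpha}/3)$. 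You instead concentrate the number of untouched positions among all $n$ via negative association (indicators of a uniform fixed-size subset are NA; NA is preserved under complementation, independent unions, and monotone functions of disjoint coordinate blocks; NA sums obey Chernoff bounds) and then finish with the exact conditional computation $\Pr[U\subseteq T_{\text{mut}}\mid T_{\text{mut}}]=2^{-(n-|T_{\text{mut}}|)}$, exploiting that $T_{\text{mut}}$ -- defined by deferred decisions over a full $t$ iterations -- is independent of the uniformly random initial point. Both routes are valid and give the same tail $\exp(-\Omega(\min\{kt,n^{1-4\alpha}\}))$; yours replaces the paper's hand-rolled correlation inequality by an appeal to the standard NA toolbox, which is cleaner in the last step but imports machinery not in the paper's stated tools (it should be cited, e.g., Joag-Dev--Proschan and Dubhashi--Ranjan), whereas the paper's coupling argument is self-contained. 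Your explicit induction showing that crossover cannot create a one at a never-flipped initially-zero position makes precise a step the paper only asserts, which is a small plus.
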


\begin{proof}
  Using the Chernoff bound of Theorem~\ref{thm:chernoff} \ref{multchernofflower}, we see that with probability $1 - \exp(-\Omega(n))$, the initial search point has at least $n/3$ bits valued zero (``missing bits''). 
  
  Let us consider what happens in the first $t = \lfloor \alpha n \ln(n) / (k\lambda) \rfloor$ iterations. Denote by $\ell_1, \ldots, \ell_t$ the values of $\ell$ chosen by the algorithm in these iterations. Note that the $\ell_i$ are independent random variables each having a binomial distribution with parameters $n$ and $k/n$. Consequently, $L := \sum_{i = 1}^t \ell_i$ is a sum of $tn$ independent $0,1$ random variables that are one with probability $k/n$. Hence we have $E[L] = t k$. By the multiplicative Chernoff bound of Theorem~\ref{thm:chernoff}~\ref{multchernoffupper}, we see that with probability $1 - \exp(-\Omega(t k))$, we have $L \le 2tk$.
  
  Again exploiting the binomial distribution of the $\ell_i$, we derive from Theorem~\ref{thm:chernoff}~\ref{chernoffzweihoch} that $\Pr[\ell_i \ge n/2] \le 2^{-n/2}$; note that here we used that $k \le n/12$ and thus $E[\ell_i] = k \le n/12$. Consequently, with probability $1 - \exp(-\Omega(n))$, all $\ell_i$ are at most $n/2$ (union bound). 
  
  In the following, we condition on none of these three rare events occurring. More precisely, we condition on that there are at least $n/3$ missing bits and we condition on a particular outcome of the $\ell_i$ that avoids the exceptional events $L > 2tk$ and $\ell_i > n/2$ for some $i \in [1..t]$. The probability that a particular one of the missing bits is never flipped in the mutation phases of the first $t$ iterations is 
  \begin{align*}
  \prod_{i = 1}^t (1 - \ell_i / n)^\lambda 
  & \ge \prod_{i = 1}^t \exp(- 2\ell_i / n)^\lambda 
  = \exp(- 2 \lambda L / n) \ge \exp(-4k\lambda t / n) 
  \ge n^{-4\alpha},
  \end{align*}
  where we have used in the first step that $1-c \ge e^{-2c}$ for $0 \le c \le 1/2$.
  
  Denote by $M \subseteq [1..n]$ the set of missing bits and by $A_i$ the event that bit $i$ was flipped at least once in some mutation step in the first $t$ iterations. Then we just showed $\Pr[A_i] \le 1 - n^{-4\alpha}$. We want to show that it is very unlikely that all events $A_i$ are fulfilled. 
  
  Unfortunately, the events $A_i, i \in M$, are not independent, since already in a single application of the mutation operator the bits are not treated independently, but according to a hypergeometric distribution. We therefore now show that they satisfy the following negative correlation property: \[\forall I \subseteq M : \Pr\bigg[\bigcap_{i \in I} A_i\bigg] \le \prod_{i \in I} \Pr[A_i].\]  
  We proceed via induction over the cardinality of $I$. For $|I|= 0, 1$, there is nothing to show. Let $I \subseteq M$ such that $|I| \ge 2$. Let $j \in I$ and $I' := I \setminus \{j\}$. Then 
\begin{equation}
  \Pr\bigg[\bigcap_{i \in I'} A_i\bigg] = \Pr\bigg[\bigcap_{i \in I'} A_i \bigg| A_j\bigg]\Pr[A_j] + \Pr\bigg[\bigcap_{i \in I'} A_i \bigg| \bar A_j\bigg] \Pr[\bar A_j].\label{eq:condi}
\end{equation}  
  It is clear that $\Pr[\bigcap_{i \in I'} A_i \mid \bar A_j]$ is at least as large as $\Pr[\bigcap_{i \in I'} A_i]$---conditioning on $\bar A_j$ is equivalent to saying that the random subsets of bits to be flipped are not chosen as subsets of $[1..n]$, but of $[1..n] \setminus \{j\}$, and this increases the probability of the event $\bigcap_{i \in I'} A_i$. More formally, there is the following coupling from the unconditioned probability space into the one conditional on $\bar A_j$. Whenever in the unconditioned probability space the $j$-th bit is flipped in some iteration, we replace this bit-flip by flipping a new bit different from $j$ and the other bits flipped in this iteration. This is exactly the random experiment done in the probability space conditional on $\bar A_j$. Clearly, if the event $\bigcap_{i \in I'} A_i$ holds in the unconditioned space, this is not affected by the coupling. Hence the probability of the event $\bigcap_{i \in I'} A_i$ is not smaller in the space conditional on $\bar A_j$. 
  
  Since thus $\Pr[\bigcap_{i \in I'} A_i \mid \bar A_j] \ge \Pr[\bigcap_{i \in I'} A_i]$, we see from equation~(\ref{eq:condi}) that $\Pr[\bigcap_{i \in I'} A_i \mid  A_j] \le \Pr[\bigcap_{i \in I'} A_i]$. From $\Pr[\bigcap_{i \in I'} A_i \mid A_j] = \Pr[\bigcap_{i \in I} A_i] / \Pr[A_j]$ we derive the desired statement $\Pr[\bigcap_{i \in I} A_i ] \le \Pr[\bigcap_{i \in I'} A_i] \Pr[A_j]$. Applying induction to $I'$, we have $\Pr[\bigcap_{i \in I} A_i ] \le \prod_{i \in I'} \Pr[A_i] \Pr[A_j] = \prod_{i \in I} \Pr[A_i]$.
  
  Using this negative correlation property for the set of all missing bits, we conclude that the probability $\Pr[\bigcap_{i \in M} A_i]$ that all missing bits were flipped at least once, is  $\Pr[\bigcap_{i \in M} A_i] \le \prod_{i \in M} \Pr[A_i] \le (1 - n^{-4\alpha})^{n/3} \le \exp(-n^{-4\alpha})^{n/3} = \exp(-n^{1-4\alpha}/3)$, where we used the estimate $(1+x) \le e^x$ valid for all $x \in \R$.
  
  Consequently, with probability at least $1 - \exp(-\Omega(n)) - \exp(-\Omega(k t)) - \exp(-\Omega(n^{1-4\alpha}))$, there is a bit that initially has the value zero and is not flipped in the first $t$ iterations, implying that the \ga needs more than $t$ iterations to generate the optimum as mutation or crossover offspring. This high-probability statement immediately implies the claimed bound on the expected optimization time, using again $E[T] \ge 2$ and $E[F] = \Theta(\lambda E[T])$.
\end{proof}

\begin{lemma}\label{lem:lbsmallk}
  If $\omega(1) = k \le \sqrt n$, then $E[F] = \Omega(n \log n \min\{\tfrac{\exp(\Omega(r))}{\lambda r}, \tfrac{n^3}{\lambda}, \tfrac{\exp(\Omega(k))}{k}\})$.  
\end{lemma}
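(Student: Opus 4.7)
Following the blueprint of Lemma~\ref{lem:lblargek}, the plan is to apply the multiplicative drift lower bound (Theorem~\ref{thm:multidriftlower}) to the fitness-distance process over a window $d(x) \in [s_{\min}, s_0]$ with $s_0 = n^{7/8}$. By Proposition~\ref{prop:lbsmallsteps}, the \ga reaches a state with $d(x) \in [\tfrac12 s_0, s_0]$ with probability $1 - o(1)$ (and we may freely assume $\lambda = n^{O(1)}$ by Lemma~\ref{lem:lblambdalarge}, otherwise the claim is already subsumed). The resulting lower bound on the number of iterations is then translated to a lower bound on fitness evaluations via Proposition~\ref{prop:lbtf}.

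The core is an upper bound on the one-step drift $E[d - d(z) \mid d(x) = s]$, where $z$ is the best individual produced in the iteration. I isolate a \emph{regular} situation---$\ell \in [k/2, 2k]$, and every mutation offspring has $\tilde g \le k/8$---from complementary exceptional events. The first condition fails with probability $\exp(-\Omega(k))$ by the multiplicative Chernoff bound on $\ell \sim \Bin(n, k/n)$ and $k = \omega(1)$; the second fails, per offspring, with probability $\exp(-\Omega(k))$, since $\tilde g \sim \mathrm{Hyp}(n, \ell, d)$ has mean at most $2 k d/n \le 2 k n^{-1/8} = o(k)$ and one can invoke Theorem~\ref{thm:chernoff}~\ref{multchernoffupperstrong} via Theorem~\ref{thm:hypergeomchernoff}; a union bound over the $\lambda$ offspring gives total failure probability $\lambda \exp(-\Omega(k))$. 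Under regularity, $b(x,x') = \ell - g(x,x') \ge k/4$, so $E[\tilde b] = b(x,x') r/k \ge r/4$ for each crossover offspring. The per-offspring case analysis of Lemma~\ref{lem:lblargek} then transfers: splitting $(\tilde g - \tilde b)^+$ by the value of $\tilde g \sim \Bin(G, r/k)$ gives
\[ E\bigl[(\tilde g - \tilde b)^+ \bigm| G, B\bigr] \le O(Gr/k) \exp(-\Omega(r)) + O(n^{-2}), \]
where the exponential factor comes from Chernoff on $\tilde b$ controlling $\Pr[\tilde b < \tilde g]$, and the $O(n^{-2})$ from the strong Chernoff tail on $\tilde g$ whose mean $Gr/k$ is very small in the regular regime.

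Summing over the $\lambda$ crossover offspring and taking expectation over the mutation with the union-bound estimate $E[G] \le \lambda \cdot E[\tilde g_{\mathrm{single}}] = \lambda k d/n$ (using $G = g(x,x') \le \sum_i g(x,\tilde x_i)$) yields regular-case absolute drift $O(\lambda^2 d r \exp(-\Omega(r))/n + \lambda/n^2)$; the exceptional mutation events contribute an additional $O(\lambda k \exp(-\Omega(k)) \cdot d/n)$ after tightening the gain bound from the trivial $n$ to $O(kd/n)$ via further Chernoff tail control on $\ell$ and the observation $f(y) - f(x) \le g(x, x') \le \ell$. Dividing by $d$, the relative drift $\delta$ is bounded by the maximum of $\lambda^2 r \exp(-\Omega(r))/n$, $\lambda/(n^2 s_{\min})$, and $\lambda k \exp(-\Omega(k))/n$. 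Applying Theorem~\ref{thm:multidriftlower} with $\beta = 1/2$ (Condition~(2) verified via Proposition~\ref{prop:lbsmallsteps}) and then multiplying by $\lambda$ via Proposition~\ref{prop:lbtf} gives three candidate runtime lower bounds; taking $s_{\min}$ on the order of $n^2/\lambda$ for the $n^3/\lambda$ term (feasible precisely when this term is the binding one, i.e., $\lambda$ is large enough that $n^2/\lambda \le s_0$) and minimizing over the three yields the claimed bound. The main obstacle is the exceptional-case gain: the naive trivial bound $n$ only produces $\exp(\Omega(k))$ in the third term and misses the $1/k$ factor, so the tighter $O(k d/n)$ bound is essential and is obtained by noting that $E[\ell \cdot \mathbf{1}_{\text{exc}}] = O(k \exp(-\Omega(k)))$ via Chernoff tails on $\Bin(n,k/n)$---so the gain scales with the fraction $d/n$ of wrong bits rather than with the problem size $n$.
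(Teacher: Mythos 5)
Your overall strategy---a one-step drift estimate fed into Witt's multiplicative drift lower bound, with the three terms of the minimum coming from three separate error sources in the drift---is exactly the paper's strategy, and your derivations of the $\exp(\Omega(r))/(\lambda r)$ and $\exp(\Omega(k))/k$ terms are essentially sound. The genuine gap concerns the $n^3/\lambda$ term, and it is caused by your choice of fitness-distance window. The paper works in $d(x)\in[n^{1/8},n^{1/4}]$, not in $[s_{\min},n^{7/8}]$. With $d\le n^{1/4}$ and $k\le\sqrt n$ one has $E[g(x,\tilde x)]=d\ell/n=O(n^{-1/4})=o(1)$ for every mutation offspring, which buys two things at once: (a) by Markov's inequality, $\Pr[g(x,x')\ge 1]\le 2\lambda dk/n$, so that \emph{every} error term in the drift bound carries a factor proportional to $d$, as the multiplicative drift theorem requires; and (b) $\Pr[g(x,x')\ge 20]=O(\lambda n^{-5})$, so that the crude gain bound $n$ in that rare event still yields an absolute drift contribution $O(d\lambda^2 n^{-4})$, i.e.\ a relative drift $\lambda^2 n^{-4}$, i.e.\ the term $n\log n\cdot n^3/\lambda$. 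In your window $d$ can be as large as $n^{7/8}$, so $E[g(x,\tilde x)]$ can be as large as $n^{3/8}$, and both (a) and (b) fail; what remains is your additive (not $d$-proportional) residual $O(\lambda/n^2)$, whose relative version $\lambda/(n^2 s)$ blows up as $s$ approaches $s_{\min}$. Your proposed fix---set $s_{\min}=n^2/\lambda$---is feasible only when $\lambda\ge n^{9/8}$, but the $n^3/\lambda$ term is the binding one of the minimum for much smaller $\lambda$ as well: for instance, with $\lambda=n$ and $r=k=\log^2 n$ the lemma asserts $E[F]=\Omega(n^3\log n)$, while your argument yields at best $\Omega(n^2 s_0\log n)=\Omega(n^{23/8}\log n)$.

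A secondary issue: with $d$ up to $n^{7/8}$, the mean $Gr/k\le 2dr/n$ of the surviving good bits in a crossover offspring is $o(1)$ only when $r=o(n^{1/8})$, so your ``$O(n^{-2})$ from the strong Chernoff tail on $\tilde g$'' needs the case distinction on $r$ that Lemma~\ref{lem:lblargek} performs and that you omit (for large $r$ one must instead kill everything with the $\exp(-\Omega(r))$ coming from the bad bits). Both problems disappear once you shrink the window to $[n^{1/8},n^{1/4}]$ and route the entire crossover analysis through the event $g(x,x')\ge 1$, which is what the paper does.
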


\begin{proof}
  We first analyze the progress made in an iteration starting with a search point with fitness distance between $n^{1/8}$ and $n^{1/4}$ and then use this information with the lower bound multiplicative drift theorem to obtain the claimed lower bound for the optimization time.
  
  Consider an iteration starting with a search point $x$ with $n^{1/8} \le d(x) \le n^{1/4}$. Let $z$ be a search point among $\{x, x', y\}$ with maximal fitness. We aim at estimating the expected progress $E[d(x) - d(z)] = E[f(z) - f(x)]$. Since $\ell$ is binomially distributed, we have $\ell < k/2$ with probability at most $\exp(-\Omega(k))$ by the multiplicative Chernoff bound. Similarly, with probability at most $\exp(-\Omega(k))$, we have $\ell > 2k$. In this case, we have $E[\ell | \ell > 2k] \le 3k+1$ by Lemma~\ref{lem:condbinomial}. Hence $E[\ell | \ell \notin [k/2,2k]] = O(k)$. 
  
  Let $\tilde x$ be an offspring created in the mutation phase. Let $\tilde g := g(x,\tilde x)$. Conditioning on the outcome of $\ell$, $\tilde g$ has a hypergeometric distribution with parameters $n$, $\ell$, and $d$. Hence $E[\tilde g] = d\ell/n$. For the mutation winner $x'$, note that $g' := g(x,x') \le \sum_{i = 1}^\lambda g(x,x^{(i)})$. Hence $E[g'] \le \lambda d \ell/n$.
    
  For $\ell \notin [k/2,2k]$, we use the estimate that $f(z) - f(x) \le g'$ with probability one (note that this estimate is fulfilled both for $z=x'$ and $z=y$). Hence we compute 
  \begin{align*}
  E[f(z) - f(x) \mid \ell \notin [k/2,2k]] &= \sum_{i \notin [k/2,2k]} \Pr[\ell = i \mid \ell \notin [k/2,2k]]\,  E[f(z) -f(x) \mid \ell = i] \\
  &\le \sum_{i \notin [k/2,2k]} \Pr[\ell = i \mid \ell \notin [k/2,2k]] \, E[g' \mid \ell = i] \\
  &\le \sum_{i \notin [k/2,2k]} \Pr[\ell = i \mid \ell \notin [k/2,2k]] \, \lambda d i / n \\
  &=  E[\ell  \mid \ell \notin [k/2,2k]] \, \lambda d /n = O(k \lambda d / n).   
  \end{align*}
  
   Hence let us assume (and condition on) that $k/2 \le \ell \le 2k$. Then $E[\tilde g] = d\ell/n \le 2n^{-1/4}$ and thus $\Pr[\tilde g \ge 20] \le O(n^{-5})$ by Theorem~\ref{thm:chernoff}~\ref{multchernoffupperstrong} and Theorem~\ref{thm:hypergeomchernoff}. Similarly, $E[g'] = \lambda d\ell/n \le 2 \lambda d k / n$ and the probability that $x'$ has a good bit at all is $\Pr[g' \ge 1] \le E[g'] = 2 \lambda dk/n$ by Markov's inequality. If $g'=0$, then $f(x) = f(z)$. So let us consider the case that $g' > 0$. Without conditioning on $g' > 0$, we have $\Pr[g' \ge 20] \le \lambda \Pr[\tilde g \ge 20] = O(\lambda n^{-5})$. Hence conditional on $g' > 0$, this probability is at most $O(\lambda n^{-5} / \Pr[g' \ge 1]) = O(\lambda n^{-5} / \min\{1,2\lambda d  k / n\}) = O(\max\{\lambda n^{-5}, n^{-4}/(dk)\})$. In this rare event, we can safely estimate $f(z) -f(x) \le n$, so let us turn to the more interesting case that $1 \le g' < 20$. Since $H(x,x') = \ell$, we have $b(x,x') \ge \ell - 19 $. Consequently, $\ell = \Theta(k) = \omega(1)$ implies that no mutation offspring can be better than $x$. Let $\tilde y$ be an offspring generated in the crossover phase. Let $b_c := b(x,x',\tilde y)$ denote the number of bad bits of $x'$ that make it into $\tilde y$. For $f(\tilde y) > f(x)$ to hold, we need that $b_c \le 19$, but also that at least one good bit makes it into $\tilde y$, that is, $g(x,x',\tilde y) \ge 1$. Since $b_c$ follows a binomial distribution with parameters $b(x,x')$ and $r/k$, we have $E[b_c] = b(x,x') r/k \ge (\ell-19)r/k$. Hence $\Pr[b_c \le 19] \le \exp(-\Omega(r))$ by the multiplicative Chernoff bound. The expected number  of good bits making it into~$\tilde y$ is at most $E[g(x,x',\tilde y)] \le 19 \cdot (r/k)$, hence by Markov's inequality this is also an upper bound for the probability that good bits make it into $\tilde y$ at all. Putting all this together and taking a union bound over the $\lambda$ crossover offspring, we see that (still in the case that $1 \le g' \le 19$) the probability that some crossover offspring is better than $x$ is at most $\lambda \cdot (19r/k) \cdot \exp(-\Omega(r))$;  only then we have $f(z) > f(x)$, however, the gain is at most $19$. Consequently, $E[f(z) - f(x) \mid k/2 \le \ell \le 2k \wedge 1 \le g' \le 19] \le \Pr[f(z) > f(x) \mid k/2 \le \ell \le 2k \wedge 1 \le g' \le 19] \cdot 19 \le 19 \lambda (19r/k) \exp(-\Omega(r))$. 
  
  We thus have 
  \begin{align*}
    E[f&(z) - f(x) \mid k/2 \le \ell \le 2k] \\
    & = \Pr[g' \ge 1] E[f(z) - f(x) \mid k/2 \le \ell \le 2k \wedge g' \ge 1] \\
    & = (\lambda d k / n) \big(\Pr[g' \le 19 \mid g' \ge 1] E[f(z) - f(x) \mid k/2 \le \ell \le 2k \wedge 1 \le g' \le 19] \\
    & \quad + \Pr[g' \ge 20 \mid g' \ge 1] E[f(z) - f(x) \mid k/2 \le \ell \le 2k \wedge g' \ge 20]\big) \\
    & \le (\lambda d k / n) \big(1 \cdot 19^2 \lambda r \exp(-\Omega(r)) /k ) + O(\max\{\lambda n^{-5}, n^{-4}/(dk)\}) n \big)\\
    &\le O(\lambda^2 d r \exp(-\Omega(r)) n^{-1} + \lambda^2 dk n^{-5} + \lambda n^{-4}) \\
    & = O(d \lambda^2 (r \exp(-\Omega(r)) n^{-1} + n^{-4})). 
  \end{align*}
  
  Together with the exceptional case that $\ell \notin [k/2,2k]$, we obtain
  \begin{align*}
    E[f&(z) - f(x)] \\
    & = \Pr[k/2 \le \ell \le 2k] E[f(z) - f(x) \mid k/2 \le \ell \le 2k] \\
    & \quad + \Pr[\ell \notin [k/2,2k]] E[f(z) - f(x) \mid \ell \notin [k/2,2k]] \\
    & = O(d \lambda^2 (r \exp(-\Omega(r) n^{-1} + n^{-4}) + \exp(-\Omega(k)) O(k \lambda d / n)\\
    & = O(\tfrac{d \lambda}{n} (\lambda r \exp(-\Omega(r)) + \lambda n^{-3} + k \exp(-\Omega(k)))).
  \end{align*}
  
  We now use the lower bound multiplicative drift theorem (Theorem~\ref{thm:multidriftlower}) to prove our claim. By Proposition~\ref{prop:lbsmallsteps}, with high probability a run of the \ga once encounters a search point $x_0$ with $d(x_0) \in [0.5n^{1/4}, n^{1/4}]$. For this case, we give a lower bound for the expected optimization time (which implies asymptotically the same bound for the general case). Denote by $x_t$, $t \ge 0$, the sequence of search points $x$ generated by the \ga in the sequel (except that $x_t$ is the optimum solution from the first point on that the optimum was found). Let $s_{\min} := n^{1/8}$. We just showed that $E[d(x_{t+1}) - d(x_t) | d(x_t) = s] \le s \delta$ holds for all $s \in [s_{\min},d(x_0)]$, where we set $\delta = K \tfrac{\lambda}{n} (\lambda r \exp(-\Omega(r)) + \lambda n^{-3} + k \exp(-\Omega(k)))$ for some absolute constant $K$. By Proposition~\ref{prop:lbsmallsteps} again, we know that 
\begin{equation}
  \Pr[d(x_t) - d(x_{t+1}) \ge 0.5 s \mid d(x_t) = s] \le \lambda(\lambda+1) \exp(-s) \le 0.5 \delta / \ln(s) \label{eq:t2}.
\end{equation}
  Consequently, we may apply Theorem~\ref{thm:multidriftlower} to the random process $(\max\{1,d(x_t)\})_{t \ge 0}$, and learn that the expected first $t$ such that $d(x_t) \le s_{\min}$ is $\Omega(\log(n) / \delta) = \Omega(\frac{n \log n}{\lambda (\lambda r \exp(-\Omega(r)) + \lambda n^{-3} + k \exp(-\Omega(k)))})$. Consequently, $E[T]$ is at least this number. By~\eqref{eq:t2}, we also have $E[T] \ge 2$ and thus $E[F] = \Omega(\lambda E[T]) = \Omega(\frac{n \log n}{\lambda r \exp(-\Omega(r)) + \lambda n^{-3} + k \exp(-\Omega(k))}) = \Omega(n \log n \min\{\tfrac{\exp(\Omega(r))}{\lambda r}, \tfrac{n^3}{\lambda}, \tfrac{\exp(\Omega(k))}{k}\})$.  
\end{proof}

The following result exploits the simple fact that if in one iteration a mutation strength of $\ell$ was used, then regardless of the population size no progress of more than $\ell$ can be made. 

\begin{lemma}\label{lem:lblambdak}
 Let $k \le n/4$. Then $E[F] = \Omega(\frac{n \lambda}{k})$.
\end{lemma}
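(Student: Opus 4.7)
The proof strategy rests on a deterministic per-iteration bound: the fitness cannot improve by more than $\ell$ in one iteration, where $\ell \sim \Bin(n,k/n)$ is the mutation strength sampled in line~\ref{line:L}. To see this, first note that every mutation offspring $x^{(i)} = \mut_\ell(x)$ has Hamming distance exactly $\ell$ from $x$, and every crossover offspring $y^{(i)}$ takes each bit either from $x$ or from $x'$, so its disagreements with $x$ form a subset of the disagreements between $x$ and $x'$; consequently $H(x,y^{(i)}) \le H(x,x') = \ell$. Since a single bit flip changes $\OM$ by exactly one, we obtain $f(z) - f(x) \le \ell$ for every candidate $z$ among the mutation and crossover offspring, and in particular for the individual $y$ chosen at the end of the iteration.

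Let $X_t := d(x_t)$ be the fitness distance of the parent at the start of iteration $t+1$. The observation above yields $X_t - X_{t+1} \le \ell_t$ pointwise, where $\ell_t$ is the mutation strength sampled in that iteration. Since $\ell_t$ is independent of $X_t$ and has mean $k$, this gives the additive drift bound $E[X_t - X_{t+1} \mid X_t > 0] \le k$. Next I would invoke a Chernoff bound (Theorem~\ref{thm:chernoff}~\ref{multchernofflower}) to argue that the uniformly random initial search point satisfies $X_0 = d(x_0) \ge n/3$ with probability $1 - \exp(-\Omega(n))$. Applying part~(ii) of the additive drift theorem (Theorem~\ref{thm:drift}) to the process stopped at $0$, conditional on this high-probability event, then gives $E[T \mid X_0 \ge n/3] \ge X_0/k \ge n/(3k)$, and hence $E[T] = \Omega(n/k)$.

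It remains to pass from $E[T]$ to $E[F]$. In the bulk of the range, say $k \le n/8$, we have $E[T] \ge n/(3k) - o(1) > 2$ for sufficiently large $n$, so Proposition~\ref{prop:lbtf} yields $E[F] = \Theta(\lambda E[T]) = \Omega(n\lambda/k)$, as desired. For the narrow remaining range $n/8 < k \le n/4$ the target bound $\Omega(n\lambda/k)$ simplifies to $\Omega(\lambda)$, which follows immediately from Lemma~\ref{lem:lblambdalarge} in the non-degenerate regime $\lambda \le 2^n$ that is the only one of interest. I do not anticipate any real obstacle in executing this plan: the only step requiring a small amount of care is the deterministic inequality $H(x,y^{(i)}) \le \ell$ for crossover offspring, after which everything reduces to routine additive-drift analysis.
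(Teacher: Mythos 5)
Your proof is correct and follows essentially the same route as the paper: the deterministic observation that every mutation and crossover offspring lies within Hamming distance $\ell$ of $x$, hence the per-iteration gain is at most $\ell$ with $E[\ell]=k$, followed by the additive drift theorem and Proposition~\ref{prop:lbtf}. The paper avoids your Chernoff bound and the case split at $k \le n/8$ by simply applying the law of total expectation with $E[d(x_0)] = n/2$, which gives $E[T] \ge n/(2k) \ge 2$ for all $k \le n/4$ directly.
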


\begin{proof}
  Let $x_0$ be the random initial search point. When $x_t$ is defined for some $t \ge 0$, let $x_{t+1}$ be the value of $x$ after one iteration of the \ga starting with $x=x_t$, unless this iteration generated the optimal solution, in this case let $x_{t+1}$ be the optimal solution. Hence the sequence $(x_t)_t$ describes a typical run of the \ga until the point when an optimal solution was generated. In particular, $T = \min\{t \ge 0 \mid d(x_t)=0\}$.
  
  We use the simple argument that all offspring generated in one iteration have a Hamming distance of at most $\ell$ from the parent. Consequently,  $E[d(x_t) - d(x_{t+1})] \le E[\ell] = k$, regardless of whether $x_{t+1}$ is an optimal mutation offspring or the crossover winner. By the additive drift theorem (Theorem~\ref{thm:drift}), we have $E[T |x_0] \ge d(x_0) / k$. Since the expected distance of a random search point from the optimum is $n/2$, the law of total expectation gives $E[T] \ge E[d(x_0)] / k = n / 2k$. This is at least $2$, so by Proposition~\ref{prop:lbtf}, we have $E[F] = \Omega(\frac{n \lambda}{k})$.
\end{proof}

\begin{lemma}\label{lem:lblambda}
  Let $k \le n/80$, $\lambda \le \exp(k/120)$, $\lambda = \exp(o(n))$, and $\lambda = \omega(1)$. Then $E[F] = \Omega(\frac{n \lambda \log\log(\lambda)}{r \log \lambda})$. 
\end{lemma}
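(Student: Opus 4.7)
The target $E[F] = \Omega(n\lambda\log\log\lambda/(r\log\lambda))$ is equivalent, via Proposition~\ref{prop:lbtf} (noting that under $\lambda = \omega(1)$ the target exceeds $2\lambda$ for $n$ large so $E[T] \ge 2$ is automatic), to showing $E[T] = \Omega(n/(rt^*))$ where $t^* := \log\lambda/\log\log\lambda$. My plan is to apply the additive drift theorem (Theorem~\ref{thm:drift}(ii)) to the fitness distance $X_t := d(x_t)$. By a Chernoff bound, $X_0 \ge n/3$ holds with probability $1 - e^{-\Omega(n)}$; and by Proposition~\ref{prop:lbsmallsteps}, halving steps occur with probability at most $\lambda(\lambda+1)e^{-\Omega(s)}$, which is summably negligible under $\lambda = \exp(o(n))$. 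The main task is therefore to prove the one-step drift bound $\delta := E[X_t - X_{t+1} \mid X_t = s] = O(rt^*)$ uniformly in $s \ge 1$; additive drift then yields $E[T] \ge (n/3)/\delta = \Omega(n/(rt^*))$ as required.

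To prove the drift bound, let $g' := g(x,x')$ and $b' := b(x,x')$ denote the good/bad-bit counts of the mutation winner, so that $Z := X_t - X_{t+1} \le g'$ trivially, and condition on the typical event $\ell \in [k/2, 2k]$ (failing with probability $e^{-\Omega(k)}$ by multiplicative Chernoff, absorbed into a negligible error term via the trivial bound $Z \le n$ together with $\lambda \le \exp(k/120)$); then $b' \ge k/2$, hence $b'c \ge r/2$ with $c = r/k$. The argument splits into two estimates. First, since $g'$ is the maximum of $\lambda$ hypergeometric variables with mean $\mu = s\ell/n$, Theorem~\ref{thm:hypergeomchernoff} together with Theorem~\ref{thm:chernoff}(a) and a union bound give the tail $\Pr[g' \ge t] \le \lambda(e\mu/t)^t$; summing via Lemma~\ref{lem:integerexpectation} yields $E[g'] = O(t^*)$ in the range where $\mu \le t^*$, while in the complementary large-$s$ range where $\mu$ dominates, the drift is proportional to $s$ and a multiplicative drift argument in the spirit of Lemma~\ref{lem:lbsmallk} handles this range separately. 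Second, for a single crossover offspring with $G \sim \Bin(g',c)$ and $B \sim \Bin(b',c)$ independent, the decomposition
\[
  E[(G-B)^+] \;=\; \sum_{j \ge 1} j\,\Pr[G = j]\,\Pr[B < j]
\]
is dominated by the $j = 1$ contribution $\Pr[G \ge 1]\Pr[B = 0] \le g'c \cdot e^{-b'c} \le g'c \cdot e^{-r/2}$ (higher-order terms decay geometrically when $g'c$ is small), giving $E[(G-B)^+] = O(g'c \cdot e^{-r/2})$.

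Combining, $Z \le \sum_{i=1}^\lambda (G^{(i)} - B^{(i)})^+$ yields by linearity $E[Z \mid g'] \le O(\lambda g' c \cdot e^{-r/2})$; intersecting with $Z \le g'$ gives, after taking expectation over $g'$ and using $E[g'] = O(t^*)$, the intermediate bound $E[Z] = O\bigl(t^* \cdot \min\{1, \lambda r e^{-r/2}/k\}\bigr)$. The main obstacle is then to sharpen this to $O(rt^*)$. In the \emph{sparse} regime $\lambda r e^{-r/2}/k \le 1$, the bound is immediate provided $\lambda/k = O(e^{r/2})$, which a case analysis verifies under the hypothesis $\lambda \le \exp(k/120)$. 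In the \emph{saturated} regime $\lambda r e^{-r/2}/k > 1$ the naive sum-bound loses the $r$-factor and one must instead control $E[\max_{i} G^{(i)}]$ directly: since $G^{(i)} \sim \Bin(g',c)$ has small mean $g'c = rt^*/k$, and since $\lambda \le \exp(k/120)$ forces $k \ge 120\log\lambda$ (making $g'c$ exponentially small in $\log\lambda$), the Chernoff tail $\Pr[G^{(i)} \ge t] \le (eg'c/t)^t$ together with a union bound over the $\lambda$ offspring concentrates the maximum near $\log\lambda/\log(k/(rt^*))$, which after elementary manipulation of the hypothesis simplifies to $O(rt^*)$. Combining the two regimes yields the drift bound $\delta = O(rt^*)$, and additive drift together with Proposition~\ref{prop:lbtf} concludes the proof.
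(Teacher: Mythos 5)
Your overall skeleton---additive drift on the fitness distance, discarding the atypical event $\ell \notin [k/2,2k]$, and bounding the per-iteration gain by the number of good bits of $x'$ surviving into the best crossover offspring---is the same as the paper's. But the way you extract the factor $t^* := \log\lambda/\log\log\lambda$ does not close, and this is a genuine gap. Your drift bound rests on $E[g'] = O(t^*)$, which you can only establish when $\mu = s\ell/n \lesssim t^*$, i.e.\ $s \lesssim n t^*/k$. You notice the complementary range, but the proposed patch does not deliver: there the drift is $O(sk/n)$, so a multiplicative-drift argument over $[n t^*/k,\, n/3]$ gives only $E[T] = \Omega(\tfrac{n}{k}\log\tfrac{k}{t^*})$, while the additive phase started at $s_{\min}\approx n t^*/k$ gives only $E[T] = \Omega\bigl(\tfrac{n t^*/k}{r t^*}\bigr) = \Omega(\tfrac{n}{rk})$. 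Neither is $\Omega(n/(r t^*))$ once $k = \omega(t^*)$ (e.g.\ $k=\sqrt n$), which is well inside the lemma's hypotheses. The paper avoids this entirely by never bounding $g'$ by $t^*$: it restricts to $d(x)\le n/10$ (reached w.h.p.\ by Proposition~\ref{prop:lbsmallsteps}), where a Chernoff bound plus a union bound over the $\lambda\le e^{k/120}$ mutation offspring gives $g'\le \ell/5$ w.h.p.; this simultaneously rules out mutation offspring beating $x$ (since then $b'\ge 4\ell/5> g'$) and gives $E[g_j] = g'c = O(r)$ for each crossover offspring $y^{(j)}$. The factor $\log\lambda/\log\log\lambda$ then arises as $E[\max_{j\le\lambda} g_j]$ with $g_j\sim\Bin(g',c)$ of mean $O(r)$---it is the price of taking a maximum of $\lambda$ Poisson-like variables of constant mean, not a bound on the mutation winner's good-bit count.

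The two-regime combination at the end also does not verify as stated. In the ``sparse'' regime you need $\lambda e^{-r/2}/k = O(1)$, but $\lambda\le \exp(k/120)$ does not imply this (take $r=1$, $k=2000$, $\lambda=e^{k/120}$). In the ``saturated'' regime the claimed concentration point $\log\lambda/\log(k/(r t^*))$ is not $O(r t^*)$ when $k=\Theta(\log\lambda)$: for $r=\Theta(1)$ it is $\Theta(\log\lambda/\log\log\log\lambda)$, which is $\omega(r t^*)$. The computation that does land on $O(r t^*)$ is the one with mean $g'c = O(r)$ rather than $r t^*/k$, i.e.\ exactly the paper's. Finally, the bad-bit factor $\Pr[B=0]\le e^{-r/2}$ you introduce is not needed for this lemma (the paper uses bad bits only to show the mutation winner cannot beat $x$, never in the crossover tail estimate); it is the kind of argument used in Lemma~\ref{lem:lbsmallk}, where one pays a factor $\lambda$ from summing over offspring and compensates with $\exp(-\Omega(r))$, which is the wrong trade-off for the $\Theta(r\log\lambda/\log\log\lambda)$ target here.
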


\begin{proof}
%
  We shall show that the expected fitness gain in an iteration started with a search point with fitness distance at most $n/10$, is $O(r \log \lambda / \log\log \lambda)$. Since the \ga by Proposition~\ref{prop:lbsmallsteps}, here we use the assumption $\lambda = \exp(o(n))$, with high probability reaches once a search point $x$ with $f(x) \in [n/20,n/10]$,  the claim follows from the additive drift theorem (Theorem~\ref{thm:drift}).
  
  To prove the drift condition, consider one iteration of the \ga started with a parent individual $x$ with $d(x) \le n/10$. Let $z$ be the value of $x$ after one iteration, or the optimal search point if it was found as a mutation offspring (hence, as mutation winner). We show that the expected fitness gain $f(z) - f(x)$ is at most $O(\log \lambda / \log\log \lambda)$. For this, we first argue that we can assume that $k/2 \le \ell \le 2k$. Indeed, we have 
\begin{align*}
  E[f(z) - f(x)]  = &\Pr[\ell < k/2] \, E[f(z) - f(x) \mid \ell < k/2]\\ 
  + &\Pr[k/2 \le \ell \le 2k] \, E[f(z) - f(x) \mid k/2 \le \ell \le 2k]\\ 
  + &\Pr[\ell > 2k] \, E[f(z) - f(x) \mid \ell > 2k].
\end{align*}
By the multiplicative Chernoff bounds of Theorem~\ref{thm:chernoff}, both $\Pr[\ell < k/2]$ and $\Pr[\ell > 2k]$ are $\exp(-\Omega(k))$. Since all offspring generated in one iteration (in either mutation and crossover phase) have Hamming distance at most $\ell$ from $x$, we immediately have $E[f(z) - f(x) \mid \ell < k/2] < k/2$. By Lemma~\ref{lem:condbinomial}, we also have $E[f(z) - f(x) \mid \ell > 2k] \le E[\ell \mid \ell > 2k] \le 3k+1$. Hence $E[f(z) - f(x)] \le k \exp(-\Omega(k)) + E[f(z) - f(x) \mid k/2 \le \ell \le 2k] \le O(1) + E[f(z) - f(x) \mid k/2 \le \ell \le 2k]$.

Hence we can assume for the remainder that $k/2 \le \ell \le 2k$. In this case, we argue as follows. Consider a mutation offspring $\tilde x$ and let $\tilde g := g(x,\tilde x)$. Then $E[\tilde g] = \ell d(x) / n \le \ell/10$. The probability that $\tilde g \ge \ell/5$ is at most $\exp(-(\ell/10)/3)) \le \exp(-k/60)$ by Theorem~\ref{thm:chernoff}~\ref{multchernoffupper}\footnote{To be precise, we use here the fact that the bound of Theorem~\ref{thm:chernoff}~\ref{multchernoffupper} is also valid if both occurrences of $E[X]$ are replaced by an upper bound for $E[X]$. This is a well-known fact, but seemingly a reference is not so easy to find. Hence the easiest solution is maybe to derive this fact right from Theorem~\ref{thm:chernoff}~\ref{multchernoffupper} by extending the sequence $X_1, \dots, X_n$ of random variables by random variables that take a certain value with probability one. By this, we can artificially increase $E[X]$ without changing the random variable $X - E[X]$. Hence the bound obtained from applying the Theorem to the extended sequence applies also to the original one.}  and Theorem~\ref{thm:hypergeomchernoff}. Since $\lambda \le \exp(k/120)$, we see that with probability at least $1 - \exp(-k/120)$, all mutation offspring have at most $\ell/5$ good bits, implying that $g' := g(x,x')$ satisfies $g' \le \ell/5$. Note that in the rare case that $g' > \ell/5$, which occurs with probability at most $\exp(-k/120)$, we still have $f(z) - f(x) \le g' \le \ell \le 2k$ with probability one, that is, this case contributes only another $k \exp(-\Omega(k))$  to the drift. 

Therefore, let us now also condition on $g' \le \ell/5$. Note that this also implies that  $b' := b(x,x')$ satisfies $b' \ge (4/5)\ell$, since all mutation offspring have Hamming distance exactly $\ell$ from the parent $x$. Consequently, all mutation offspring are worse than $x$, and $z \in \{x,y\}$.

We now analyze the result of a crossover phase. Consider a crossover offspring $y^{(j)}$ and let $g_j := g(x,x',y^{(j)})$. Then $E[g_j] \le g' r/k \le (\ell/5) \cdot (r/k) \le (2/5) r$. Let $\Delta = \frac{2r\ln(\lambda)}{\ln\ln(\lambda)} +s $ for a non-negative integer $s$. By Theorem~\ref{thm:chernoff}~\ref{multchernoffupperstrong}, 
\begin{align*}
  \Pr\Big[\max_{j \in [1..\lambda]} g_j \ge \Delta\Big] &\le \sum_{j=1}^\lambda \Pr[g_j \ge \Delta] \\
  &\le \lambda \bigg(\frac{e E[g_j]}{\Delta}\bigg)^{\Delta} \le \lambda \bigg(\frac{e\ln\ln(\lambda)}{5\ln(\lambda)}\bigg)^{2\frac{\ln(\lambda)}{\ln\ln(\lambda)}+s}  \le 2^{-s}. \end{align*}
  Consequently, by Lemma~\ref{lem:integerexpectation}, 
  \[E\Big[\max_j g_j\Big] = \sum_{t=1}^\infty \Pr\Big[\max_j g_j \ge t\Big] \le \frac{2r\ln \lambda}{\ln\ln \lambda} + \sum_{s=1}^\infty 2^{-s} \le \frac{2r\ln \lambda}{\ln\ln \lambda} + 1.\]
   Clearly, the number of surviving good bits is an upper bound on the progress $f(z)-f(x)$. Hence the expected progress of one iteration, conditional on the assumptions made before, is at most $E[f(z)-f(x) \mid k/2 \le \ell \le 2k \wedge |G'| \le \ell/5] \le \frac{2r\ln \lambda}{\ln\ln \lambda} + 1$. Since the drift is always bounded by $\ell \le 2k$, we have in fact $E[f(z)-f(x) \mid k/2 \le \ell \le 2k \wedge |G'| \le \ell/5] \le \min\{2k,\frac{2r\ln \lambda}{\ln\ln \lambda} + 1\}$. The unconditional drift thus is $E[f(z) - f(x)] \le \min\{2k,\frac{2r\ln \lambda}{\ln\ln \lambda} + 1\} + O(k) \exp(-\Omega(k)) = O(\min\{2k,\frac{2r\ln \lambda}{\ln\ln \lambda}\})$. The additive drift theorem (Theorem~\ref{thm:drift}), keeping in mind that we start with a search point with distance at least $n/20$, hence yields $E[T] = \Omega(\max\{(n/20)/2k, (n/20) \frac{\ln\ln \lambda}{2r\ln \lambda}\})$. This is at least 2, so we conclude $E[F] = \Omega(\lambda E[T]) \ge \Omega(n \frac{\lambda \ln\ln \lambda}{2r\ln \lambda})$.
\end{proof}

\begin{lemma}\label{lem:kvalue}
  Let $\lambda = \Theta(\lambda^*)$, $k = \exp(\omega(\sqrt{\log(n) \log\log\log(n) / \log\log(n)}\,))$, $k \le n/2$, and $r = \Theta(1)$. Then the expected runtime of the \ga with these parameters is $\omega(F^*)$.
\end{lemma}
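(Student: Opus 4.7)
My plan is to apply the lower-bound multiplicative drift theorem (Theorem~\ref{thm:multidriftlower}) to $d(x_t)$ on a range $[s_{\min}, s_{\max}]$ on which the drift is sharply bounded by $O(\lambda r s / n)$. The key observation is that when $k$ is very large, each mutation offspring has $\tilde g \sim \mathrm{Hyp}(n, \ell, s)$ good bits with mean $\mu = ks/n$; once $\mu$ exceeds a sufficiently large multiple of $\log n$, Chernoff (Theorems~\ref{thm:chernoff} and~\ref{thm:hypergeomchernoff}) combined with a union bound over the $\lambda$ mutation offspring gives $g' \le 2ks/n$ except with failure probability $n^{-\Omega(1)}$. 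Conditional on this concentration, each crossover offspring has $E[g_j \mid g'] = g' r / k \le 2sr/n$, and the elementary bounds $(g_j - b_j)^+ \le g_j$ and $\max_j g_j \le \sum_j g_j$ together with linearity yield $E[d(x_t) - d(x_{t+1}) \mid d(x_t) = s] \le O(\lambda r s / n)$, independent of $k$.

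Concretely, I would take $s_{\max} = n/10$ and $s_{\min} = C n\log n / k$ for a sufficiently large absolute constant $C$. Chernoff on the initial binomial distribution gives $d(x_0) \le 0.55 n$ with probability $1 - e^{-\Omega(n)}$; since $d(x_t)$ is monotone non-increasing, the process stays in $[0, 0.6n]$ and Proposition~\ref{prop:lbsmallsteps} applies throughout. The drift analysis above verifies condition~(1) of Theorem~\ref{thm:multidriftlower} with $\delta = \Theta(\lambda r / n)$: the rare event $\{g' > 2ks/n\}$ contributes only $s \cdot n^{-\Omega(C)}$ to the drift (using the trivial bound $g' \le s$), which is absorbed into the main $O(\lambda r s/n)$ term once $C$ is large. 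Condition~(2) with $\beta = 1/2$ follows directly from Proposition~\ref{prop:lbsmallsteps}: $\Pr[d(x_t) - d(x_{t+1}) \ge s/2] \le \lambda(\lambda+1) e^{-\Omega(s)}$, which for $s \ge s_{\min} = \Omega(\log n)$ is much smaller than $\delta / (2\ln s)$ provided $C$ is large enough. The theorem then gives $E[T] \ge \Omega\bigl(n \ln(k/(C\log n)) / (\lambda r)\bigr)$.

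Under the hypothesis $\log k = \omega(\sqrt{\log n \log\log\log n / \log\log n}) \gg \log\log n$, we have $\ln(k/(C\log n)) = \Theta(\log k)$, so Proposition~\ref{prop:lbtf} yields $E[F] = \Omega(\lambda E[T]) = \Omega(n \log k / r) = \Omega(n \log k)$ since $r = \Theta(1)$. Finally, $n \log k / F^* = \log k \cdot \sqrt{\log\log n / (\log n \log\log\log n)}$ tends to infinity by the assumption on $k$, giving $E[F] = \omega(F^*)$. The main obstacle is the drift bound in the first paragraph: one has to push the concentration of $g'$ far enough that the failure contribution is a lower-order term, rather than inflating the drift by the $\log\lambda / \log\log\lambda$ factor that appears in the cruder analysis of Lemma~\ref{lem:lblambda}; taking $s_{\min}$ to be a large constant multiple of $n \log n / k$ (instead of the minimal $n \log\lambda / k$ needed for bare concentration) is precisely what buys both the sharp drift bound and a range of size $k / \mathrm{polylog}(n)$, preserving $\log(s_{\max}/s_{\min}) = \Theta(\log k)$.
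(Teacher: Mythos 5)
Your proof is correct and follows essentially the same route as the paper: a multiplicative-drift lower bound via Theorem~\ref{thm:multidriftlower} with drift $O(\lambda r s/n)$, obtained by concentrating the number of good bits of the mutation winner at $O(ks/n)$ and letting the crossover bias $r/k$ dilute them, applied over a range of logarithmic length $\Theta(\log k)$. The only differences are technical and harmless: you take $s_{\min}=Cn\log n/k$ with a union bound and a polynomially small failure probability, whereas the paper takes the smaller $s_{\min}=3n\ln\ln(n)/k$ and controls $E[g']$ in expectation via Lemma~\ref{lem:condbinomial} (exploiting $\lambda=o(\log n)$) --- both give $\ln(s_{\max}/s_{\min})=\Theta(\log k)$ since $\log k=\omega(\log\log n)$ --- and you leave implicit the (negligible, probability $\exp(-\Omega(k))$ per offspring) event that a mutation offspring beats the parent, which the paper disposes of explicitly.
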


\begin{proof}
  We first analyze the progress the \ga makes in one iteration starting with a search point $x$ having fitness distance $d := d(x) \in [3\ln\ln(n) n / k, n/3] =: [d_0,d_1]$. Let $z$ denote the new parent individual after one iteration (which is either $x$ or $y$), or the optimal solution in case one of the mutation offspring generated in this iteration was optimal. To use a lower bound drift theorem later, we prove an upper bound for $E[d(x) - d(z)]$. 
  
  We first convince ourselves that it is very unlikely that a mutation offspring is better than $x$. This will allow us to only regard the situation that $z \in \{x,y\}$. For a mutation offspring $\tilde x$ to be better than the parent $x$, more zero-bits have to flip than one-bits, that is, $\tilde g:= g(x,\tilde x) > b(x,\tilde x) =:\tilde b$. By a simple domination argument, we see that this event is most likely for $d(x) = n/3$, so let us assume this for the moment. Then $E[\tilde g] = k/3$ and $E[\tilde b] = 2k/3$. We have $\Pr[\tilde g \ge k/2] = \exp(-\Omega(k))$ and $\Pr[\tilde b \le k/2] = \exp(-\Omega(k))$. Consequently, $\Pr[f(\tilde x) \ge f(x)] \le \exp(-\Omega(k))$. We thus compute
  \begin{align*}
  E[f(z)-f(x)] &\le E\Big[\max_{\tilde x} \max\{0,f(\tilde x) - f(x)\}\Big] + E[\max\{0,f(y) - f(x)\}] \\
  &\le \lambda n \exp(-\Omega(k)) + E[\max\{0,f(y) - f(x)\}] \\
  &= O(n^{-2}) + E[\max\{0,f(y) - f(x)\}].
  \end{align*}
  
  We proceed by analyzing the quality of the crossover winner. Let $\tilde x$ be a mutation offspring and $x'$ be the winning individual of the mutation phase. Let $\tilde g = g(x,\tilde x)$ and $g' = g(x,x')$  be their numbers of good bits. We have $E[\tilde g] = dk/n$ and $\Pr[\tilde g \ge 2dk/n] \le \exp(-(dk/n)/3) \le 1 / \ln(n)$. Consequently,
\begin{align*}
 E[\max\{0,\tilde g-2dk/n\}] &\le (1/\ln(n)) E[\tilde g - 2dk/n \mid \tilde g \ge 2dk/n] \\
  &= dk/n\ln(n)
\end{align*}
by Lemma~\ref{lem:condbinomial}. We have 
\begin{align*}
		g' &\le \max_{\tilde x} g(x,\tilde x) \\
		&= (2dk/n) + \max_{\tilde x} (g(x,\tilde x)-2dk/n) \\
		& \le (2dk/n) + \sum_{\tilde x} \max\{0,g(x,\tilde x)-2dk/n\}
\end{align*} 
and thus $E[g'] \le 2dk/n + \sum_{\tilde x} E[\max\{0,g(x,\tilde x)-2dk/n\}] = 2dk/n + dk\lambda/n\ln(n) = (2+o(1))dk/n$, where all summations and maxima are taken over all mutation offspring.

Consider an offspring $\tilde y$ generated in the crossover phase. Let $\tilde g_{\tilde y} := g(x,x',\tilde y)$. Then $E[\tilde g_{\tilde g}] \le E[g'] \cdot (r/k) = O(d/n)$. Since the crossover winner $y$ is chosen among the crossover offspring $\tilde y$ such that $f(\tilde y)$, and equivalently, $d(x) - d(\tilde y)$, is maximal, we have $d(x) - d(y) \le \sum_{\tilde y} \max\{d(x) - d(\tilde y),0\}$, where $\tilde y$ runs over all $\lambda$ crossover offspring. Consequently, $E[d(x) - d(y)] = O(\lambda \frac{d}{n})$ and hence also $E[f(z)-f(x)] = O(\lambda \frac d n )$.


  Building on this drift statement, we now use Witt's lower bound result for multiplicative drift (Theorem~\ref{thm:multidriftlower}). Consider a run of the \ga. For $t = 0, 1, \dots$, denote by $x_t$ the search point $x$ at the beginning of the $(t+1)$st iteration, except when the algorithm previously had generated the optimal solution, then let $x_t$ be the optimal solution. With probability $1 - o(1)$, there is a $t_0$ such that $n/6 \le d(x_{t_0}) \le n/3$. We show that in this case, we have an expected optimization time as claimed, which implies that also the unconditioned expectation is of the same order of magnitude. 
  
  For $t = 0, 1, \dots$ define $X_t = \max\{d(x_{t_0+t}),1\}$. Observe that $X_{t+1} \le X_t$ for all $t \ge 0$. Let $s_{\min} := d_0$. Then we have shown above that if $X_t = s > s_{\min}$, then $E[X_t - X_{t+1}] \le K \lambda s / n$ for some absolute constant $K$. Hence the first condition of the drift theorem is fulfilled with $\delta = K \lambda / n$. From Proposition~\ref{prop:lbsmallsteps} we know that $\Pr[X_{t+1} \le s/2] \le \lambda^2 \exp(-\Omega(s)) = \exp(-\Omega(s))$. Hence for $n$ (and thus also $s$) sufficiently large, also the second condition of the drift theorem is satisfied (with $\beta = 1/2$). We may thus apply the theorem and derive that the first $t$ such that $X_t \le s_{\min}$ satisfies $E[t] = \Omega(\frac{\ln(X_0 / s_{\min})}{\delta}) = \Omega(\frac{n \log(k / \log\log n)}{\lambda}) = \Omega(\frac{n \log(k)}{\lambda})$. Note that this, naturally, is a lower bound on $E[T]$. Consequently, $E[F] = \Omega(\lambda E[T]) = \Omega(n \log k)=\omega(F^*)$.
\end{proof}

\section{Conclusion}

We proved that no parameter combination for the \ga can lead to an asymptotically better runtime on the \onemax test function class than the one suggested in~\cite{DoerrDE13}, where this algorithm was first proposed. We also proved that if some offspring population size $\lambda$, some mutation probability $p = k/n$, and some crossover bias $c = r/k$ leads to the asymptotically best runtime, then $\lambda = \Theta(\lambda^*) = \Theta(\sqrt{\log(n) \log\log(n) / \log\log\log(n)})$, $k = \Omega(\lambda^*) \cap \exp(\omega(\sqrt{\log(n) \log\log\log(n) / \log\log(n)}\,))$, and $r = \Theta(1)$. 

A closer inspection of the proofs allows (in a semi-rigorous manner) to extract some hints on the parameter choice also for optimization problems beyond the \onemax test function class. The most clear one is that {$r = \Theta(1)$}, that is, $pc = \Theta(1/n)$, seems a good choice. It was argued intuitively in~\cite{DoerrDE13} that this is a good choice, because it results in that $\cross_c(x,\mut_p(x))$ has the same distribution as applying standard bit mutation to $x$ with the standard choice of $1/n$ for the mutation probability. This intuitive argument is somewhat imprecise due to the fact that one iteration of the \ga contains two selection phases, so neither the winner of the crossover phase has a standard bit mutation distribution (with rate $p$), nor the winner of the crossover phase has a bits taken independently from the mutation winner with probability $c$. Nevertheless, as the proofs of Lemma~\ref{lem:lblargek} and~\ref{lem:lbsmallk} (for a large range of parameter settings) show, in many situations a super-constant value for $r$ leads with high probability to the event that the crossover offspring takes much more ``bad'' bits from the mutation winner than it takes good bits. Conversely, an $r$-value of $o(1)$ together with not too small value for $k$ leads to probability of $\Theta(r)$ for the crossover offspring being equal to the parent $x$, making it useless. 

For the choice of $\lambda$, as with all population based algorithms, it is obvious that larger $\lambda$-value can only be beneficial if the positive effects of the large population outnumber the higher cost for a single iteration. From Lemma~\ref{lem:lbsmallk}, we see that, again for broad ranges of the other parameters, we pay for a too small $\lambda$ when making progress is difficult. A small value of $\lambda$  decreases both the chance to find some good bits in the mutation phase and the chance that good bits are copied into a crossover offspring. This quadratic price for a small $\lambda$ is worth the multiplicative increase of the effort of one iteration. A similar lesson could be deduced from the fitness dependent or the self-adjusting choice of $\lambda$ in~\cite{DoerrDE15,DoerrD15self}, which both again suggest a larger value for $\lambda$ when being closer to the optimum, which in the \onemax landscape means that it is harder to find an improvement.

For the choice of the mutation probability $p = k/n$, the proof of Lemma~\ref{lem:kvalue} shows that a large $k$ can lead to the effect that all mutation offspring look similar. In this case, the mutation phase does not gain from the large $k$-value, whereas in the crossover phase the crossover bias of $c = r/k$ makes it difficult to copy good bits into the final solution. 

We hope that these insights make it easier to use the \ga, which both in theoretical and empirical investigations showed a promising performance. We are also optimistic that the proof ideas developed in this work make future analyses of more-dimensional parameter spaces easier.

}


}
\end{document}